\def\eqref#1{equation~\ref{#1}}
\def\1{\bm{1}}
\DeclareMathAlphabet{\mathsfit}{\encodingdefault}{\sfdefault}{m}{sl}
\SetMathAlphabet{\mathsfit}{bold}{\encodingdefault}{\sfdefault}{bx}{n}
\newcommand{\E}{\mathbb{E}}
\newcommand{\R}{\mathbb{R}}
\newtheorem{lemma}{Lemma}
\newtheorem{definition}{Definition}
\newtheorem{remark}{Remark}
\newtheorem{theorem}{Theorem}
\newtheorem{corollary}{Corollary}
\newcommand{\rebuttal}[1]{#1}
\title{Evolving Domain Generalization}
\author[1]{William Wei Wang\thanks{Corresponde to William Wei Wang wwang828@uwo.ca}}
\author[1]{Gezheng Xu}
\author[1]{Ruizhi Pu}
\author[1]{Jiaqi Li}
\author[2]{Fan Zhou}
\author[2]{Changjian Shui}
\author[1]{Charles Ling}
\author[2, 3]{Christian Gagn\'e}
\author[1, 4]{Boyu Wang}
\affil[1] {Western University}
\affil[2] {Universit\'e Laval}
\affil[3] {Mila, Canada CIFAR AI Chair}
\affil[4] {Vector Institute}
\begin{document}

\maketitle

\begin{abstract}
Domain generalization aims to learn a predictive model from multiple different but related source tasks that can generalize well to a target task without the need of accessing any target data. Existing domain generalization methods ignore the relation between tasks, implicitly assuming that all the tasks are sampled from a stationary environment. Therefore, they can fail when deployed in an evolving environment. To this end, we formulate and study the \emph{evolving domain generalization} (EDG) scenario, which exploits not only the source data but also their evolving pattern to generate a model for the unseen task. Our theoretical result reveals the benefits of modeling the relation between two consecutive tasks by learning a globally consistent directional mapping function. In practice, our analysis also suggest solving the EDG problem in a meta-learning manner, which leads to \emph{directional prototypical network}, the first method for the EDG problem. Empirical evaluation on both synthetic and real-world data sets validates the effectiveness of our approach.
\end{abstract}

\section{Introduction}

Modern machine learning techniques have achieved unprecedented success over the past decades in various areas.
However, one fundamental limitation of most existing techniques is that a model trained on one data set cannot generalize well on another data set if it is sampled from a different distribution. Domain generalization (DG) aims to alleviate the prediction gap between the observed source domains and an \emph{unseen} target domain by leveraging the knowledge extracted from multiple source tasks~\citep{wjd_77,wjd_88,wjd_12}. 


Existing DG methods can be roughly categorized into three groups: data augmentation/generation, disentangled/domain-invariant feature learning, and meta-learning \citep{jd_survey}. One intrinsic problem with these methods is that they treat all the domains equally and ignore the relationship between them, implicitly assuming that they are all sampled from a stationary environment. However, in many real-world applications, the data are usually collected sequentially and the learning tasks can vary in an \emph{evolving} manner. For example, geological exploration is often carried out periodically and the distribution of data collected can change from year to year due to environmental changes. Medical data are also often collected with age or other indicators as intervals, and there is an evolving trend in the data of different groups. As a more concrete example,  Fig.~1(a) shows several instances from the rotated MNIST (RMNIST) data set, a widely used benchmark in the DG literature, where the digit images of each subsequent domain are rotated by $15^\circ$. Fig.~1(b) reports the generalization performances of several state-of-the-art DG algorithms on the data set, from which it can be clearly observed that the performances drop when deploying the models on outer domains (i.e., tasks of 0 and 75 degrees). The results indicate that the algorithms ignore the evolving pattern between the domains properly. As a consequence, they are good at ``interpolation" but not at ``extrapolation". 

In this paper, we formulate this learning scenario as \emph{evolving domain generalization} (EDG), which aims to capture and exploit the evolving patterns in the environment. In contrast to most existing DG methods, which produce models that are {isotropic} with respect to all the domains, EDG can generalize to a target domain along a specific direction by extracting and leveraging the relations between source tasks. Specifically, we develop a novel theoretical analysis that highlights the importance of modeling the relation between two consecutive tasks to extract the evolving pattern of the environment. Moreover, our analysis also suggests learning a globally consistent directional mapping function via meta-learning. Inspired the theoretical results, we slightly modify prototypical networks \citep{proto} and propose \emph{directional prototypical networks} (DPNets), a simple and efficient EDG algorithm that adapts to the environment shift and generalizes well on the evolving target domain. As a comparison, Fig. 1(c) shows the performance improvement of DPNets over the other algorithms on RMNIST data set. It can be observed that the performance gap between DPNets and the other baseline algorithms has widened as the domain distance increases. More details can be found in Section~\ref{sec:exp}.

Here, we would like to emphasize the key difference between EDG and domain adaptation in evolving domains \citep{laed,cida,cudaal}. While both learning paradigms aim to tackle the issue of evolving domain shifts, the latter still requires unlabeled instances from the target domain. In this sense, EDG is  more challenging  and existing theoretical and algorithmic results cannot be applied to this problem directly.


\begin{figure*}[!tb]
		\centering 
		\subfloat[]{\includegraphics[width=0.33\textwidth]{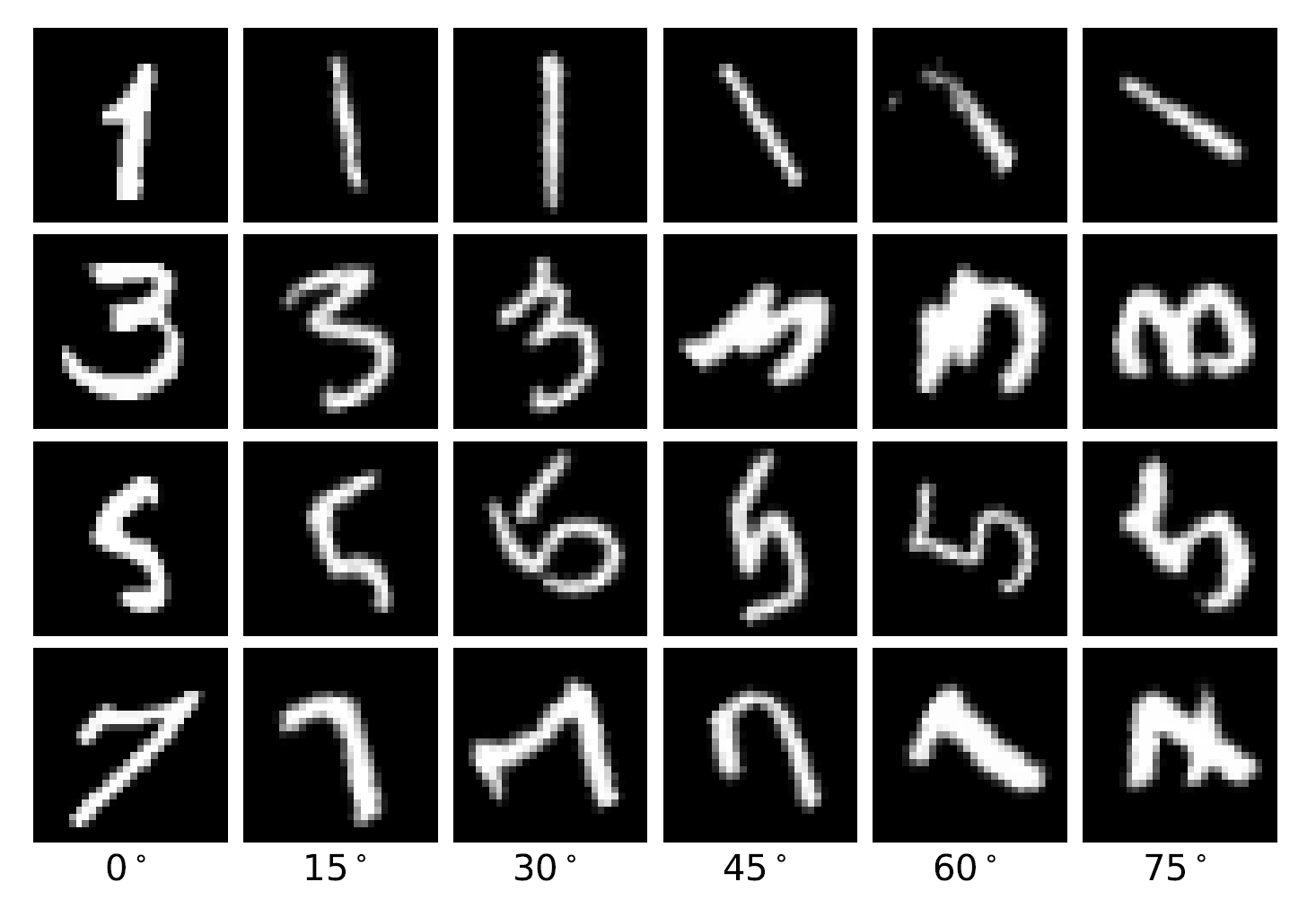}\label{motivation_figure:rmnist}}
		\subfloat[]{\includegraphics[width=0.33\textwidth]{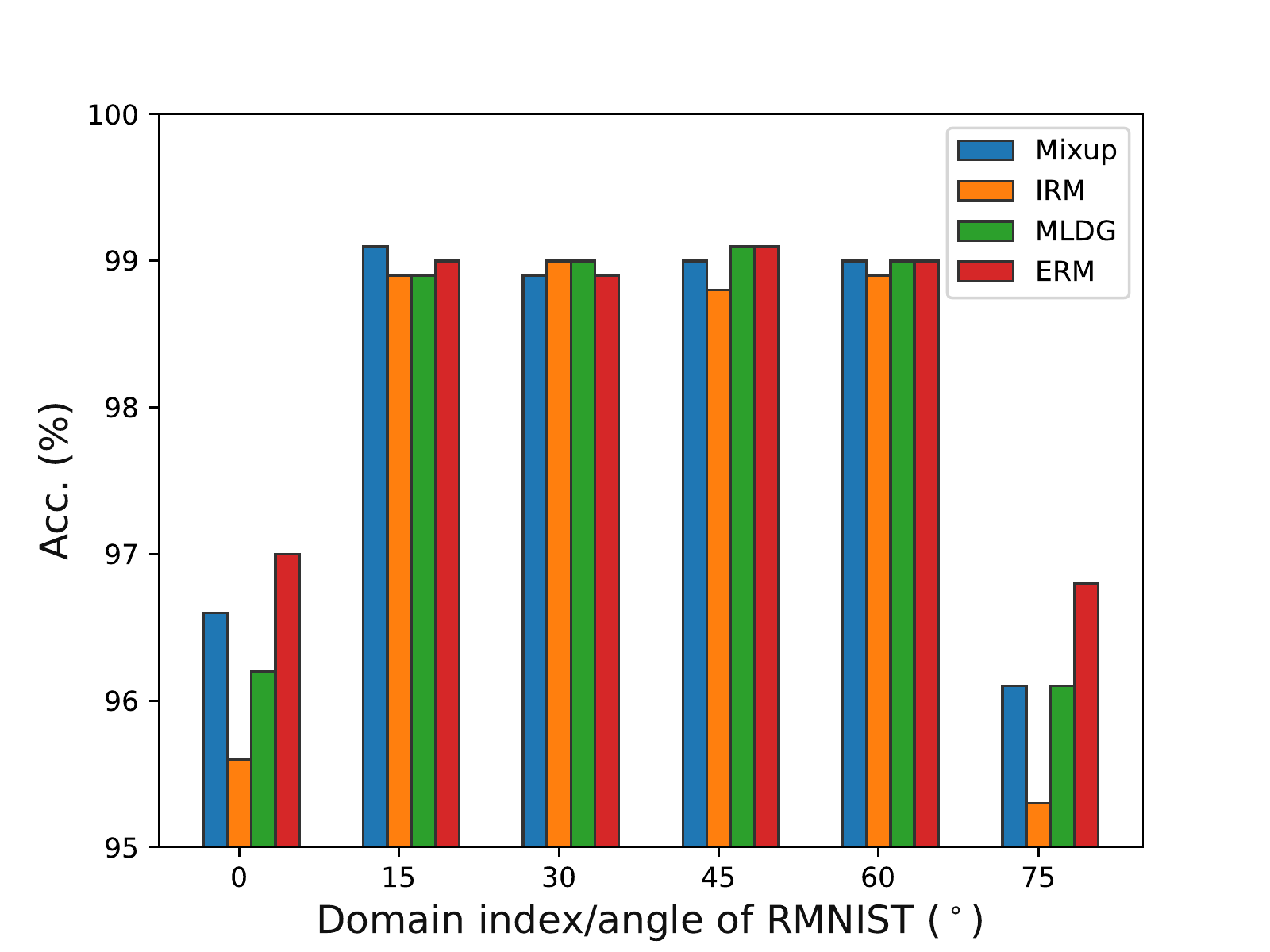}\label{motivation_figure:acc_baselines}}
		\subfloat[]{\includegraphics[width=0.33\textwidth]{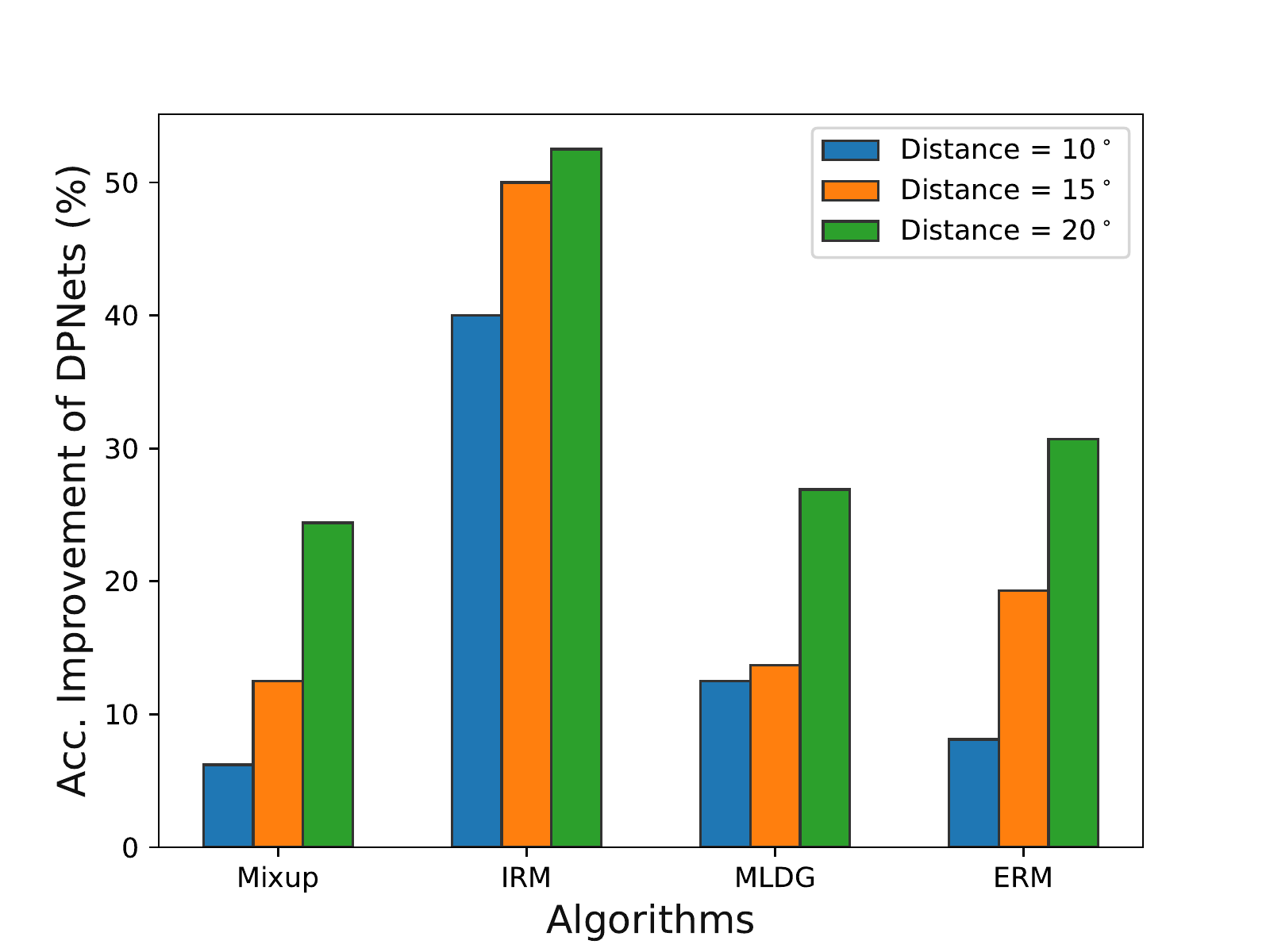}\label{motivation_figure:acc_forward_backward}}
\caption{(a) Evolving manner among RMNIST domains. (b) Accuracy of traditional DG methods on evolving domains. These methods cannot generalize well on outer domains ($0^\circ$ and $75^\circ$). (c) Comparison between the performance of our method and baselines on outer domains. The proposed method outperforms all the baselines.}
\label{motivation_figure}
\end{figure*}


\section{Related Work}

\paragraph{Domain Generalization (DG).} Domain generalization aims to train a model which generalizes on all domains. Existing DG methods can be classified into three categories. The first and most popular category is representation learning, which focus on learning a common representation across domains. It can be achieved by domain-invariant representation learning \citep{wjd_24,wjd_59,wjd_77,wjd_88} and feature disentanglement \citep{wjd_93, wjd_101}. The former focuses on aligning latent features across domains, and the later tries to distill domain-shared features. Secondly, data manipulation can also empower the model with generalization capability. Data manipulating techniques include data augmentation \citep{wjd_28, wjd_35}, which usually extend the dataset by applying specific transformations on existing samples, and data generation \citep{wjd_38}, which often applies neural networks to generate new samples. \rebuttal{\cite{dirt} convert DG to an infinite-dimensional constrained statistical learning problem under a natural model of data generation. The theoretically grounded method proposed in \cite{dirt} leverages generating model among domains to learn domain-invariant representation.} The last part are meta-learning. As a widely applicable method, meta-learning framework \citep{wjd_12, wjd_13, wjd_14} is used to improve the generalizing capability by simulating the shift among domains. 
\rebuttal{Apart from the above three categories, there are some other learning strategies that help to improve the generalization ability of the model. \cite{wjd_104} tries to ensemble multiple models into a unified one which can generalize across domains. The DRO-based methods \citep{rahimian2019distributionally} which aim to learn a model at worst-case distribution scenario also match the target of DG well. Besides, gradient operation \citep{wjd_112}, self-supervision \citep{wjd_114} and random forest \citep{wjd_115} are also exploited to improve generalizing capability.}
Different from the existing DG methods that focus on learning one unified model for all domains, our approach tries to train prediction model for the target domain specifically by leveraging the evolving pattern among domains.

\paragraph{Evolving Domain Adaptation (EDA).} Many previous work in domain adaptation area notice the evolving pattern of domains and leverage it to improve the performance in different settings. \cite{laed} proposes a meta-adaptation framework which enables the learner to adapt from one single source domain to continually evolving target domains without forgetting. \cite{cudaal} tries to adapt to multiple target domains sequentially without forgetting, while the most important difference is that there is no assumption about the evolving pattern between these domains. \cite{usgda} focuses on adapting from source domain to the target domain with large shifts by leveraging the unlabeled intermediate samples. \cite{cida} combines the traditional adversarial adaptation strategy with a novel regression discriminator that models the encoding-conditioned domain index distribution. \cite{chen2021gradual} investigate how to discover the sequence of intermediate domains without index information then adapt to the final target. The experimental results and theoretical analysis demonstrate the value of leveraging index information when working on evolving domains. \rebuttal{These studies fully demonstrate that leveraging evolving pattern between domains is beneficial and worth more exploration. However, there are two significant limitation in previous works. The first one is the requirements for accessing unlabeled data in the target domain. The second one is that all previous theoretical results are base on the assumption that distance between sequential domains is small, which would become vacuous as the environment evolves. This is contrary to the fact that more domains provide more evolving information which can help to improve the performance. In this paper, the evolving information is leveraged without accessing any samples from the target domain. Also, our theoretical results are based on proposed $\lambda$-consistency, a intuitive and realistic measurement of evolving level in the environments.}

\section{Theoretical Analysis and methodology}
Let $\{\mathcal D_1(x,y),\mathcal D_2(x,y),...,\mathcal D_m(x,y)\}$ be $m$ observed source domains sampled from an environment $\mathcal{E}$ 
where $x\in \mathcal{X}$ and $y \in \mathcal{Y}$ are, respectively, the data point and its corresponding label. The goal of DG is to learn a hypothesis $h\in \mathcal{H}$ so that it can have a low risk on an unseen but related target domain $\mathcal{D}_t$:
\begin{align*}
    R_{\mathcal{D}_t}(h) \triangleq \mathbb E_{(x,y)\sim \mathcal D_t}[\ell(h(x),y)]
\end{align*}



where $\ell:\mathcal Y\times\mathcal Y\rightarrow \mathbb R_+$ is a non-negative loss function, and $\mathcal{H}$ is a hypothesis class that maps $\mathcal X$ to the set $\mathcal Y$. In the setting of traditional DG, as there is no relation exploited between the source domains and the target domain, most existing techniques essentially either ``enlarge" the input space  $\mathcal{X}$ along all possible directions~\citep{volpi2018generalizing,shankar2018generalizing,qiao2020learning} or learn a domain-invariant feature representation via domain alignment~\citep{li2018domain,wjd_88,zhao2020domain}. In contrast, the objective of EDG is to generalize the model on $\mathcal{D}_t$ along a specific direction when there is underlying evolving pattern between the source domains and $\mathcal{D}_t=\mathcal{D}_{m+1}$.





\subsection{Theoretical Motivations}

  In order to leverage the evolving pattern in $\mathcal{E}$, it is reasonable to assume that such a pattern can be captured by a globally consistent mapping function $g \in \mathcal{G}: \mathcal{D}_{i+1}^g \triangleq g(\mathcal{D}_i)$ in a way such that the \emph{synthetic} domain $\mathcal{D}_{i+1}^g$ is close to $\mathcal{D}_{i+1}$ as much as possible, where $\mathcal{G}$ is the class of mapping functions. We first obtain the following bound of the risk on the target domain $\mathcal{D}_t$ with respect to $\mathcal{D}_t^g$, as shown in Lemma~\ref{lemmastart}.


\begin{lemma}
\label{lemmastart}
Let ${\mathcal D^g_t}(h) = g(\mathcal{D}_m)$ be the \emph{synthetic} target domain, and suppose the loss function $\ell$  is bounded within an interval $G:G=\max(\ell)-\min(\ell)$. Then, for any $h \in \mathcal{H}$, its target risk $R_{\mathcal{D}_t}(h)$ can be upper bounded by:
\begin{align*}
    R_{\mathcal D_t}(h)\leq R_{\mathcal D^g_t}(h) +\frac{G}{\sqrt{2}}\sqrt{d_{JS}(\mathcal D^g_t||\mathcal D_t)},
\end{align*}
where $d_{JS}(\mathcal D^g_t||\mathcal D_t)$ is the Jensen-Shannon (JS) divergence between $\mathcal D^g_t$ and $\mathcal D_t$~\citep{jsd}.
\end{lemma}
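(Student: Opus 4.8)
The plan is to reduce the claim to a comparison of the \emph{same} bounded function $z=(x,y)\mapsto \ell(h(x),y)$ integrated against the two distributions $\mathcal{D}^g_t$ and $\mathcal{D}_t$ on $\mathcal{X}\times\mathcal{Y}$, and then to control that gap by the Jensen--Shannon divergence in two steps. First I would pass from the risk gap to the total-variation distance using only that $\ell$ takes values in an interval of length $G$; then I would convert total variation into JS divergence through a Pinsker-type inequality applied to the mixture.

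For the first step I would write
\begin{align*}
R_{\mathcal{D}_t}(h)-R_{\mathcal{D}^g_t}(h)=\E_{z\sim\mathcal{D}_t}[\ell(h(x),y)]-\E_{z\sim\mathcal{D}^g_t}[\ell(h(x),y)]=\int \ell(h(x),y)\,\big(d\mathcal{D}_t-d\mathcal{D}^g_t\big).
\end{align*}
Since both distributions have unit mass, subtracting a constant from the integrand leaves the difference unchanged, so I replace $\ell$ by $\ell-c$ where $c$ is the midpoint of its range, giving $|\ell-c|\le G/2$ pointwise. Bounding the integral by $\sup|\ell-c|$ times the $L^1$ distance of the densities then yields
\begin{align*}
R_{\mathcal{D}_t}(h)-R_{\mathcal{D}^g_t}(h)\le \frac{G}{2}\int\big|d\mathcal{D}_t-d\mathcal{D}^g_t\big|=G\,d_{\mathrm{TV}}(\mathcal{D}^g_t,\mathcal{D}_t),
\end{align*}
where $d_{\mathrm{TV}}$ is the total-variation distance. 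This step is routine; the only subtlety is that one must use the full \emph{range} $G=\max(\ell)-\min(\ell)$ rather than a one-sided bound, which is exactly why the midpoint recentering is needed.

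The crux is the second step, relating $d_{\mathrm{TV}}$ to $d_{JS}$. Writing $M=\tfrac12(\mathcal{D}^g_t+\mathcal{D}_t)$ for the mixture and recalling $d_{JS}(\mathcal{D}^g_t\|\mathcal{D}_t)=\tfrac12\KL(\mathcal{D}^g_t\|M)+\tfrac12\KL(\mathcal{D}_t\|M)$, I would apply Pinsker's inequality to each KL term. Because $\mathcal{D}^g_t-M=\tfrac12(\mathcal{D}^g_t-\mathcal{D}_t)$, one has $d_{\mathrm{TV}}(\mathcal{D}^g_t,M)=d_{\mathrm{TV}}(\mathcal{D}_t,M)=\tfrac12 d_{\mathrm{TV}}(\mathcal{D}^g_t,\mathcal{D}_t)$, and combining the two Pinsker bounds produces an inequality of the form $d_{\mathrm{TV}}(\mathcal{D}^g_t,\mathcal{D}_t)\le c\,\sqrt{d_{JS}(\mathcal{D}^g_t\|\mathcal{D}_t)}$ for an absolute constant $c$. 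Substituting this into the first step yields the stated bound.

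I expect the main obstacle to be pinning down the precise constant in the total-variation--to--JS inequality: Pinsker applied termwise is not tight for the Jensen--Shannon divergence, so recovering the sharp factor $1/\sqrt2$ in front of $\sqrt{d_{JS}}$ may require a dedicated TV--JS bound rather than a naive two-fold application of Pinsker. Everything else amounts to unwinding the definitions of the risk, total variation, and JS divergence.
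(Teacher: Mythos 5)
Your route is genuinely different from the paper's, and it is sound as far as it goes --- but it is worth being precise about where it lands, because your closing worry about the constant is the crux. The paper does not go through total variation at all: it first proves a change-of-measure inequality $\E_Q[g(z)] \le \log \E_P[e^{g(z)}] + D_{\mathrm{KL}}(Q\|P)$ (a Donsker--Varadhan/Gibbs-type variational bound), applies it with $g = \lambda(\ell - \E_P \ell)$ and the sub-Gaussian estimate $\log \E_P[e^{\lambda(\ell - \E_P\ell)}] \le \lambda^2 G^2/8$, takes the mixture $\mathcal{M} = \tfrac{1}{2}(\mathcal{D}_t + \mathcal{D}_t^g)$ as the reference measure, writes one inequality for $\mathcal{D}_t$ against $\mathcal{M}$ (with $\lambda<0$) and one for $\mathcal{D}_t^g$ against $\mathcal{M}$ (with $\lambda>0$), sums them, and optimizes over $\lambda$. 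You instead recenter $\ell$, pass to total variation by H\"older, and apply Pinsker termwise to the two KL components of $d_{JS}$ against the same mixture. The two arguments are cousins --- Pinsker is itself provable by exactly the paper's MGF argument --- so you are packaging into off-the-shelf inequalities what the paper unrolls from first principles; both hinge only on boundedness of $\ell$ and on the mixture $\mathcal{M}$.

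On the constant: your chain gives $R_{\mathcal{D}_t}(h) - R_{\mathcal{D}_t^g}(h) \le G\, d_{\mathrm{TV}}(\mathcal{D}_t^g,\mathcal{D}_t) \le G\sqrt{2\, d_{JS}(\mathcal{D}_t^g\|\mathcal{D}_t)}$, since termwise Pinsker yields $d_{JS} \ge \tfrac{1}{2} d_{\mathrm{TV}}^2$. You conjectured that a sharper, dedicated TV--JS inequality might recover the factor $1/\sqrt{2}$; it cannot, because the stated constant is simply wrong. Take $\mathcal{D}_t^g$ and $\mathcal{D}_t$ to be point masses at $z_0 \ne z_1$ with $\ell(z_0)=0$ and $\ell(z_1)=G$: the risk gap is $G$, while $\tfrac{G}{\sqrt{2}}\sqrt{d_{JS}} = \tfrac{G}{\sqrt{2}}\sqrt{\log 2} \approx 0.59\,G$, so the lemma as printed is false. (Likewise $d_{JS} \ge 2 d_{\mathrm{TV}}^2$ fails: for $P = (\tfrac12+s, \tfrac12-s)$, $Q = (\tfrac12-s, \tfrac12+s)$ one has $d_{\mathrm{TV}} = 2s$ but $d_{JS} = 2s^2 + O(s^4)$, so termwise Pinsker is already asymptotically tight.) In fact the paper's own derivation, computed correctly, gives exactly your constant: at its chosen $\lambda_0 = \tfrac{2}{G}\sqrt{S}$ with $S = D_{\mathrm{KL}}(\mathcal{D}\|\mathcal{M}) + D_{\mathrm{KL}}(\mathcal{D}'\|\mathcal{M}) = 2\,d_{JS}$, the optimized bound is $G\sqrt{S} = \sqrt{2}\,G\sqrt{d_{JS}}$; the $G/\sqrt{2}$ in its final line is an algebra slip, equivalent to substituting $S = d_{JS}/2$ instead of $S = 2\,d_{JS}$. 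So your proof, stated with the constant $\sqrt{2}\,G$, is the correct version of the lemma; the factor-of-two correction propagates harmlessly through Theorem~\ref{theoremds} and Corollary~\ref{corollay1}, whose structure and qualitative conclusions are unchanged.
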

\begin{remark}
\emph{To achieve a low risk on $\mathcal{D}_t$, Lemma~\ref{lemmastart} suggests (1) learning $h$ and $g$ to minimize the risk over the synthetic domain $\mathcal{D}_t^g$ and (2) learning $g$ to minimize the JS divergence between $\mathcal{D}_t^g$  and $\mathcal{D}_t$. While in practice $R_{\mathcal D^g_t}(h)$ can be approximated by the empirical risk, Lemma~\ref{lemmastart}  still cannot provide any practical guidelines for learning $g$ since $\mathcal{D}_t$ is unavailable. Moreover, note that $\mathcal{D}_t^g$  can be replaced by $g(\mathcal{D}_i)$ for any other source domain $i$ in $\mathcal{E}$ and the bound still holds. Thus, Lemma~\ref{lemmastart} does not provide any theoretical insight into how to discover and leverage the evolving pattern in $\mathcal{E}$. }
\end{remark}

Intuitively, capturing the evolving pattern in $\mathcal{E}$ is hopeless if it varies arbitrarily. On the other hand, if the underlying pattern is consistent over domains, it is reasonable to assume that there exists $g^*\in \mathcal G$ would perform consistently well over all the domain pairs. For example, given numbers $100, 202, 301$, one would expect that the numbers increase by around 100 and the next number will be around $400$, but it is challenging to guess the number if the first three numbers are $-20, 1300, 4$.  To formulate the this intuition, we  first introduce the notion of \emph{consistency} of an environment $\mathcal{E}$ below.

\begin{definition}[Consistency] 
\label{def1}
Let $g^* = \arg\min_{g\in \mathcal{G}} \max_{\mathcal{D}_i\in\mathcal{E}} d_{JS}(\mathcal{D}_i^g||\mathcal{D}_i)$  be the ideal mapping function in the worst-case domain. Then, an evolving environment $\mathcal{E}$ is \emph{$\lambda$-consistent} if the following holds:
\begin{align*}
    |d_{JS}(\mathcal D_i^{g^*}||\mathcal D_{i})-d_{JS}(\mathcal D_j^{g^*}||\mathcal D_{j})| \le \lambda, \qquad \forall \mathcal{D}_i, \mathcal{D}_j \in \mathcal{E}.
\end{align*}
\end{definition}

Note that $\lambda$ does not characterize how fast $\mathcal{E}$ evolves, but if there exists a global mapping function that can model the evolving pattern consistently well in $\mathcal{E}$. 

Given the definition of consistency, we can bound the target risk in terms of $d_{JS}(\mathcal D^g_i||\mathcal D_i)$ in the source domains.

\begin{theorem}
\label{theoremds}
Let $\{\mathcal D_1,\mathcal D_2,...,\mathcal D_m\}$ be $m$ observed source domains sampled sequentially from an evolving environment $\mathcal{E}$, and $\mathcal{D}_t$ be the next unseen target domain: $\mathcal{D}_t = \mathcal{D}_{m+1}$. Then, if $\mathcal{E}$ is $\lambda$-consistent, we have
\begin{align*}
   R_{\mathcal{D}_t}(h) \le R_{\mathcal{D}_t^{g^*}}(h)+\frac{G}{\sqrt{2(m-1)}}\Bigg(\sqrt{\sum _{i=2}^{m}d_{JS}(\mathcal D_{i}^{g^*}||\mathcal D_{i})}+\sqrt{(m-1)\lambda}\Bigg).
\end{align*}
\end{theorem}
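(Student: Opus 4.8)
The plan is to reduce the theorem to Lemma~\ref{lemmastart} and then control the (unobservable) target-domain divergence by averaging the $\lambda$-consistency inequalities across the source domains. First I would apply Lemma~\ref{lemmastart} with the mapping $g = g^*$ and the genuine target $\mathcal{D}_t = \mathcal{D}_{m+1}$, which reduces the whole task to bounding the single term $\sqrt{d_{JS}(\mathcal{D}_{m+1}^{g^*}||\mathcal{D}_{m+1})}$ using only source-domain quantities. Since $\mathcal{D}_{m+1}$ is itself a member of the environment $\mathcal{E}$, Definition~\ref{def1} applies: for each source index $i \in \{2,\dots,m\}$ --- the range in which the synthetic domain $\mathcal{D}_i^{g^*} = g^*(\mathcal{D}_{i-1})$ is well defined --- $\lambda$-consistency yields $d_{JS}(\mathcal{D}_{m+1}^{g^*}||\mathcal{D}_{m+1}) \le d_{JS}(\mathcal{D}_i^{g^*}||\mathcal{D}_i) + \lambda$.

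Next I would sum these $m-1$ inequalities to get $(m-1)\,d_{JS}(\mathcal{D}_{m+1}^{g^*}||\mathcal{D}_{m+1}) \le \sum_{i=2}^{m} d_{JS}(\mathcal{D}_i^{g^*}||\mathcal{D}_i) + (m-1)\lambda$. Taking square roots of both sides and applying the subadditivity $\sqrt{a+b}\le\sqrt{a}+\sqrt{b}$ with $a = \sum_{i=2}^m d_{JS}(\mathcal{D}_i^{g^*}||\mathcal{D}_i)$ and $b = (m-1)\lambda$ isolates $\sqrt{d_{JS}(\mathcal{D}_{m+1}^{g^*}||\mathcal{D}_{m+1})}$ with a prefactor $1/\sqrt{m-1}$. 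Substituting this back into the Lemma~\ref{lemmastart} bound and collecting the constant as $\frac{G}{\sqrt{2}}\cdot\frac{1}{\sqrt{m-1}} = \frac{G}{\sqrt{2(m-1)}}$ then recovers the stated inequality exactly.

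The argument is essentially elementary, so there is no serious analytic obstacle; the two points I would be careful about are conceptual. The first is that the averaging step --- summing over all source pairs \emph{before} taking the square root --- is precisely what generates the $1/\sqrt{m-1}$ scaling, and hence the interpretation that more source domains tighten the bound; using a single source pair would give a strictly weaker result. The second is the legitimacy of applying Definition~\ref{def1} to the target, which relies on treating $\mathcal{D}_{m+1}$ as an element of $\mathcal{E}$, so I would make that membership explicit since it is the only place the consistency hypothesis enters the proof.
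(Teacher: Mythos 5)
Your proposal is correct and follows essentially the same route as the paper's own proof: both invoke the $\lambda$-consistency inequality $d_{JS}(\mathcal D_t^{g^*}\|\mathcal D_t)\le d_{JS}(\mathcal D_i^{g^*}\|\mathcal D_i)+\lambda$ for each $i\in\{2,\dots,m\}$, average over the source domains, combine with Lemma~\ref{lemmastart}, and finish with the subadditivity $\sqrt{a+b}\le\sqrt{a}+\sqrt{b}$. The only (immaterial) difference is ordering---you apply Lemma~\ref{lemmastart} first and then bound the divergence, whereas the paper bounds the divergence first---and your explicit remark that $\mathcal D_{m+1}\in\mathcal E$ is a point the paper uses only implicitly.
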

\begin{remark}\label{rm2}
\emph{{\bf (1)}~Theorem~\ref{theoremds} highlights the role of the mapping function and $\lambda$-consistency in EDG. Given $g^*$, the target risk $R_{\mathcal{D}_t}(h)$ can be upper bounded by in terms of loss on the synthetic target domain $R_{\mathcal{D}_t^{g^*}}(h)$, $\lambda$, and the JS divergence between $\mathcal{D}_i$ and $\mathcal{D}_i^{g^*}$ in all \emph{observed} source domains. When $g^*$ can properly capture the evolving pattern of $\mathcal{E}$, we can train the classifier $h$ over the synthetic domain $\mathcal{D}_t^{g^*}$ generated from $\mathcal{D}_m$ and can still expect a low risk on $\mathcal{D}_t$. {\bf(2)}~$\lambda$ is \emph{unobservable} and is determined by $\mathcal{E}$ and $\mathcal{G}$. Intuitively, a small $\lambda$ suggests high \emph{predictability} of $\mathcal{E}$, which indicates that it is easier to predict the target domain $\mathcal{D}_t$. On the other hand, a large $\lambda$ indicates that there does not exist a global mapping function that captures the evolving pattern consistently well over domains. Consequently, generalization to the target domain could be challenging and we cannot expect to learn a good hypothesis $h$ on $\mathcal{D}_t$. {\bf (3)}~In practice, $g^*$ is not given, but can be learned by minimizing $d_{JS}(\mathcal D_{i}^{g}||\mathcal D_{i})$ in source domains. Besides, aligning $D_{i}^{g}$ and $\mathcal{D}_i$ is usually achieved by} representation learning: \emph{that is, learning $g: \mathcal{X} \rightarrow \mathcal{Z}$ to minimize $d_{JS}(\mathcal D_{i}^{g}||\mathcal D_{i}), \forall z \in \mathcal{Z}$.}
\end{remark}
In addition, we can decompose $\mathcal{D}(x,y)$ into marginal and conditional distributions to motivate more practical EDG algorithms. For example, when it is decomposed into 
class prior $\mathcal{D}(y)$ and semantic  conditional distribution $\mathcal{D}(x|y)$, we have the following Corollary.





\begin{corollary}
\label{corollay1}
Following the assumptions of Theorem 1, the target risk can be bounded by
{\small
\begin{align*}
 &R_{\mathcal D_t}(h) \leq  R_{\mathcal D_t^{g^*}}(h)+\frac{G}{\sqrt{2(m-1)}}\Bigg(\underbrace{\sqrt{\sum _{i=2}^{m}d_{JS}(\mathcal D^{g^*}_{i}(y)||\mathcal D_{i}(y))}}_{\bold{I}} + \sqrt{(m-1)\lambda}\\
& \hspace{24pt}+\underbrace{\sqrt{\sum _{i=2}^{m}\mathbb E_{y\sim \mathcal D_{i}^{g^*}(y)}d_{JS}(\mathcal D^{g^*}_{i}(x|y)||\mathcal D_{i}(x|y))}}_{\bold{II}}+\underbrace{\sqrt{\sum _{i=2}^{m}\mathbb E_{y\sim \mathcal D_{i}(y)}d_{JS}(\mathcal D^{g^*}_{i}(x|y)||\mathcal D_{i}(x|y))}}_{\bold{III}} \Bigg).
\end{align*}
}
\end{corollary}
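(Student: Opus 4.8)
The plan is to derive the corollary directly from Theorem~\ref{theoremds}, refining only the divergence term $\sqrt{\sum_{i=2}^{m} d_{JS}(\mathcal D_i^{g^*}\|\mathcal D_i)}$ while leaving the prefactor, the synthetic risk $R_{\mathcal D_t^{g^*}}(h)$, and the term $\sqrt{(m-1)\lambda}$ untouched. The whole argument reduces to a single per-domain inequality: for each source pair I want to show that the joint JS divergence splits into a label-marginal part and two conditional parts,
\begin{align*}
    d_{JS}(\mathcal D_i^{g^*}\|\mathcal D_i) \le d_{JS}(\mathcal D_i^{g^*}(y)\|\mathcal D_i(y)) + \mathbb{E}_{y\sim \mathcal D_i^{g^*}(y)} d_{JS}(\mathcal D_i^{g^*}(x|y)\|\mathcal D_i(x|y)) + \mathbb{E}_{y\sim \mathcal D_i(y)} d_{JS}(\mathcal D_i^{g^*}(x|y)\|\mathcal D_i(x|y)).
\end{align*}
Summing over $i=2,\dots,m$ and invoking the subadditivity of the square root, $\sqrt{A+B+C}\le\sqrt A+\sqrt B+\sqrt C$ for nonnegative $A,B,C$, converts $\sqrt{\sum_i d_{JS}(\mathcal D_i^{g^*}\|\mathcal D_i)}$ into exactly $\mathbf{I}+\mathbf{II}+\mathbf{III}$; substituting back into Theorem~\ref{theoremds} gives the claim.

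The heart of the proof is therefore the per-domain decomposition. Writing $P=\mathcal D_i^{g^*}$, $Q=\mathcal D_i$, and $M=\tfrac12(P+Q)$, I would start from $d_{JS}(P\|Q)=\tfrac12\KL(P\|M)+\tfrac12\KL(Q\|M)$ and factor each joint as $P(x,y)=P(y)P(x|y)$. The \emph{exact} chain rule for KL divergence then yields $\KL(P\|M)=\KL(P(y)\|M(y))+\mathbb{E}_{y\sim P(y)}\KL(P(x|y)\|M(x|y))$, and analogously for $Q$. Since $M(y)=\tfrac12(P(y)+Q(y))$, combining the two marginal pieces reproduces $d_{JS}(P(y)\|Q(y))$ exactly, which is term $\mathbf{I}$. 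What remains is to bound the two expected conditional KL terms by the symmetric conditional JS divergences that form $\mathbf{II}$ and $\mathbf{III}$.

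The main obstacle is precisely this last bound, because JS divergence does not admit a clean chain rule: the conditional of the mixture is the label-reweighted mixture $M(x|y)=\tfrac{P(y)P(x|y)+Q(y)Q(x|y)}{P(y)+Q(y)}$ rather than the balanced $\tfrac12\big(P(x|y)+Q(x|y)\big)$ defining $d_{JS}(P(x|y)\|Q(x|y))$. My first route would be to note that when the mapping preserves the class prior, $P(y)=Q(y)$, the conditional mixture collapses to the balanced one, and the elementary bound $\KL(P(x|y)\|\tfrac12(P(x|y)+Q(x|y)))\le 2\,d_{JS}(P(x|y)\|Q(x|y))$ (immediate from $d_{JS}\ge\tfrac12\KL(\cdot\|\bar M)$) delivers $\mathbf{II}$ and $\mathbf{III}$ at once. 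Removing this assumption is the delicate part: one must compare the reweighted mixture against the balanced one, and it is exactly this asymmetry that forces the two separate expectations $\mathbb{E}_{y\sim \mathcal D_i^{g^*}(y)}$ and $\mathbb{E}_{y\sim \mathcal D_i(y)}$ in the statement, the first controlling the $P$-weighted KL term and the second the $Q$-weighted one. I would close this gap by bounding the label-reweighted conditional JS divergence by twice the balanced one and aggregating the two paired terms, after which the summation and the square-root subadditivity step are routine and the corollary follows.
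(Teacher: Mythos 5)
Your proposal is correct, and at the structural level it is the same proof as the paper's: the paper isolates exactly your per-domain inequality as Lemma~\ref{jsd_decompose}, applies it inside Theorem~\ref{theoremds}, and concludes with $\sqrt{A+B+C}\le\sqrt{A}+\sqrt{B}+\sqrt{C}$. The genuine difference is at the step you call the heart of the argument, and there your version is \emph{more} careful than the paper's own. To bound the conditional terms left over by the KL chain rule, the paper writes $d_{KL}(\mathcal D(x|y)\|\mathcal M(x|y))\le d_{KL}(\mathcal D(x|y)\|\mathcal M(x|y))+d_{KL}(\mathcal D'(x|y)\|\mathcal M(x|y))=2\,d_{JS}(\mathcal D(x|y)\|\mathcal D'(x|y))$, and similarly for $\mathcal D'$. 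The final equality implicitly takes $\mathcal M(x|y)$ to be the balanced mixture $\tfrac12(\mathcal D(x|y)+\mathcal D'(x|y))$; as you point out, $\mathcal M(x|y)$ is in fact the label-reweighted mixture, so that equality holds only when $\mathcal D(y)=\mathcal D'(y)$ (or the conditionals already coincide). In general the middle quantity is $\ge 2\,d_{JS}(\mathcal D(x|y)\|\mathcal D'(x|y))$ by the centroid property of the balanced mixture, so the paper's chain does not deliver the needed upper bound; indeed the pointwise bound $d_{KL}(\mathcal D(x|y)\|\mathcal M(x|y))\le 2\,d_{JS}(\mathcal D(x|y)\|\mathcal D'(x|y))$ is false in general (take $\mathcal D(y)$ tiny relative to $\mathcal D'(y)$ at a label whose conditionals have nearly disjoint supports: the left side blows up while the right side stays below $2\log 2$). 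Your repair --- pairing the two expectations label by label, so that each $y$ contributes $(\mathcal D(y)+\mathcal D'(y))$ times the $\alpha(y)$-weighted JS divergence with $\alpha(y)=\mathcal D(y)/(\mathcal D(y)+\mathcal D'(y))$, and then bounding that by twice the balanced JS --- is exactly what makes the lemma true with no assumption on the label marginals, which is what Corollary~\ref{corollay1} (terms I--III) requires.

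The one statement you assert without proof, namely $JS_\alpha(P\|Q)\le 2\,d_{JS}(P\|Q)$ where $JS_\alpha(P\|Q):=\alpha\, d_{KL}(P\|M_\alpha)+(1-\alpha)\,d_{KL}(Q\|M_\alpha)$ and $M_\alpha=\alpha P+(1-\alpha)Q$, is true and closes in two lines via the compensation identity: for any distribution $R$ one has $\alpha\, d_{KL}(P\|R)+(1-\alpha)\,d_{KL}(Q\|R)=JS_\alpha(P\|Q)+d_{KL}(M_\alpha\|R)$, so $M_\alpha$ minimizes the left side over $R$; taking $R=\tfrac12(P+Q)$ then gives $JS_\alpha(P\|Q)\le\alpha\,d_{KL}(P\|R)+(1-\alpha)\,d_{KL}(Q\|R)\le d_{KL}(P\|R)+d_{KL}(Q\|R)=2\,d_{JS}(P\|Q)$. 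With this inserted, your argument is complete, and it is the proof that Lemma~\ref{jsd_decompose} actually needs; the paper's own derivation of that lemma is valid only in the no-label-shift special case you identified, i.e., the setting of Remark~\ref{rmk3}.
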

\begin{remark}\label{rmk3}
\emph{To generalize well to $\mathcal{D}_t$, Corollary~\ref{corollay1} suggests that a good mapping function should capture both label shifts (term I) and semantic conditional distribution shifts (terms II \& III). If we further assume that  the label distribution does not evolve over domains\footnote{When label shifts exist, term I can be minimized by reweighting/resampling the instances according to the class ratios between domains~\citep{pmlr-v139-shui21a}.}, we will have I = 0 and II = III, and the upper bound can be simplified as}
{\small
\begin{align*} 
 R_{\mathcal D_t}(h) &\leq  R_{\mathcal D_t^{g^*}}(h)+\frac{G}{\sqrt{2(m-1)}}\Bigg(2{\sqrt{\sum _{i=2}^{m}\mathbb E_{y\sim \mathcal D_{i}(y)}d_{JS}(\mathcal D^{g^*}_{i}(x|y)||\mathcal D_{i}(x|y))}} + \sqrt{(m-1)\lambda}\Bigg).
\end{align*}
}
\end{remark}
Finally, we note two key theoretical differences between EDG and previous studies of DA  in evolving environments \citep{david,laed}. (1)~In DA, the target risk is bounded in terms of  source and  target domains (e.g., $\mathcal{H}\Delta \mathcal{H}$-divergence), while  our analysis relies on the distance between synthetic and  real domains. (2)~DA theories are built upon the assumption that there exists an ideal joint hypothesis that achieves a low combined error on both domains, while our assumption is the $\lambda$-consistency of $\mathcal{E}$. These differences eventually lead to fundamentally different guidelines for EDG, as shown in Section~\ref{sec:algo}.

\subsection{Practical Implementations}\label{sec:algo}

Our analysis reveals several general strategies to follow when designing an algorithm for EDG. 

\begin{itemize}
    \item[(i)] Learning the mapping function $g$ to capture the evolving pattern by minimizing  the distance between the distributions of synthetic and real domains.  
    \item[(ii)] Learning $g$ and $h$ to minimize the  risk on the synthetic target domain $\mathcal{D}_t^{g}$.
    \item[(iii)]  Note that $\mathcal D^{g}_{i+1} = g(\mathcal{D}_i)$ is produced from $\mathcal{D}_i$, but its quality is evaluated on $\mathcal{D}_{i+1}$. Thus, minimizing $d_{JS}(\mathcal D^{g}_{i}||\mathcal D_{i})$ naturally suggests a {\bf meta-learning} strategy for learning $g$.
\end{itemize}

\rebuttal{In practice, mapping the samples from $\mathcal{D}_{i}$ to $\mathcal{D}_{i+1}$ is not necessarily performed in the original data space since our  ultimate goal is making predictions rather than generating instances themselves. Thus, we minimize the distance between $\mathcal{D}_{i+1}^g$ and $\mathcal{D}_{i+1}$ in a representation space.} 
Based on these ideas, we slightly modify prototypical networks \citep{proto} and propose \emph{directional prototypical networks} (DPNets) for EDG. \rebuttal{Specifically, the mapping function $g$ of DPNets 
consists of two different embedding functions: $f_\phi$ for $\mathcal{D}_i$ and $f_\psi$ for $\mathcal{D}_{i+1}$, where $\phi$ and $\psi$ are learnable parameters. The key idea of DPNets is to learn $g=\{f_\phi, f_\psi\}$ to capture the evolving pattern of $\mathcal{E}$ by estimating the prototypes of $f_\psi(\mathcal{D}_{i+1})$ using $f_\phi(\mathcal{D}_i)$. As each prototype can be viewed as the centroid of instances of each class, which is an approximation of the semantic conditional distribution of each class~\citep{xie2018learning,pmlr-v139-shui21a},
DPNets essentially minimizes the distance between $\mathcal{D}^{g}_{i+1}(x|y)$ and $\mathcal{D}_{i+1}(x|y)$, as suggested  by Corollary~\ref{corollay1} Remark~\ref{rmk3}. }  

Let $\mathcal{S}_i = \{(x_n^i,y_n^i)\}_{n=1}^{N_i}$ be the data set of size $N_i$ sampled from $\mathcal{D}_i$, and $\mathcal{S}_i^k$ be the subset of $\mathcal S_i$ with class $k\in\{1,...,K\}$, where $K$ is the total number of classes. Then, the prototype of domain $i$ is the mean vector of the \emph{support} instances belonging to $\mathcal{S}_i^k$:
\begin{align*}
    c_i^k=\frac{1}{|\mathcal{S}^k_i|}\sum_{(x_n^i,y_n^i)\in \mathcal{S}^k_i}f_\phi(x_n^i)
\end{align*}

In~\citep{proto}, the prototype is used to produce a distribution $\mathcal{D}({y=k}|x)$ to make a prediction for a \emph{query} instance $x$ in the context of few-shot learning, and the support and query instances are sampled from the same domain. By contrast, in DPNets, the prototypes are computed from the support set $\mathcal{S}_i$ through the embedding function $f_\phi$, but the query instances are from $\mathcal{S}_{i+1}$ and are passed through another function $f_\psi$. Then, the predictive distribution for $\mathcal{D}_{i+i}$ is given by 
\begin{align}\label{eq1}
    \mathcal{D}_{\phi,\psi}(y^{i+1}=k|x^{i+1})=\frac{\exp(-d(f_\psi(x^{i+1}), c^k_i))}{\sum_{k'=1}^{K}\exp(-d(f^{}_\psi(x^{i+1}), c^{k'}_i)},
\end{align}
where $d: \mathcal{Z}\times\mathcal{Z}\rightarrow [0,+\infty)$ is a distance function of embedding space, and we adopt squared Euclidean distance in our implementation, as suggested in~\citep{proto}. During the training stage, at each step, we randomly choose the data sets $\mathcal S_i,\mathcal S_{i+1}$ from two consecutive domains as support and query sets, respectively. Then, we sample $N_B$ samples from each class $k$ in $\mathcal S_i$, which is used to compute prototype $c_i^k$ for the query data in $\mathcal{S}_{i+1}$ .  Model optimization proceeds by minimizing  the negative log-probability: $J_{\phi,\psi}=-\log \mathcal{D}_{\phi,\psi}(y^{i+1}=k|x^{i+1})$. The pseudocode to compute $J_{\phi,\psi}$  for a training episode is shown in Algorithm~\ref{alg:cap}. In the testing stage, we pass the instances from $\mathcal S_m$ and $\mathcal S_t$ through $f_\phi$ and  $f_\psi$ respectively as support and query sets, and then make predictions for the instances in $\mathcal{D}_t$ using Eq.~(\ref{eq1}).

\begin{algorithm}[t]
\caption{The loss computation for DPNets (one episode)}\label{alg:cap}
\begin{algorithmic}

\State \textbf{Input: }  $\{\mathcal S_1,\mathcal S_2,...,\mathcal S_m\}$: $m$ data sets from consecutive domains. $N_B$: the number of support and query instances for each class. $\textsc{RandomSample}(\mathcal S, N)$: a set of $N$ instances sampled uniformly from the set $\mathcal S$ without replacement.
\State \textbf{Output: } The loss $J_{\phi,\psi}$ for a randomly generated training episode.

\State $t\leftarrow \textsc{RandomSample}(\{1,...,m\})$ 

\For{$k$ in $\{1,...,K\}$}
    \State $S^k\leftarrow \textsc{RandomSample}(\mathcal S_{i}^{k},N_B)$ 
    
    \State $Q\leftarrow \textsc{RandomSample}(\mathcal S_{i+1}^k,N_B)$ 
    
    \State $c^k=\frac{1}{|S^k|}\sum_{(x_j,y_j)\in S^k}f_\phi(x_j)$ \Comment{Compute prototypes}
\EndFor

\State $J(\phi,\psi)\leftarrow0$

\For{$k$ in $\{1,...,K\}$} 

    \For {$(x,y)$ in $Q$} 
    
    \State $J_{\phi,\psi}\leftarrow J_{\phi,\psi}+\frac{1}{KN_B}[d(f_\psi(x),c^k)+\log\sum_{k'}\exp(-d(f_\psi(x),c^k)]$
    
    \EndFor
\EndFor

\end{algorithmic}
\end{algorithm}

\section{Experiments}\label{sec:exp}


\subsection{Experimental Setup}

We evaluate our algorithm on an extensive collection of five data sets, including \textbf{two synthetic data sets} (Envolving Circle \citep{cida} and Rotated Plate  and \textbf{three real-world data sets} (RMNIST \citep{wjd_59}, Portrait \citep{usgda,chen2021gradual}, and Cover Type \citep{usgda}).


(1) \textbf{Evolving Circle (EvolCircle, Fig. \ref{fig:evolcircle})} consists of 30 evolving domains, where the instances are generated from 30 2D Gaussian distributions with the same variances but different centers uniformly distributed on a half-circle. 
(2) \textbf{Rotated Plate (RPlate, Fig.~\ref{fig:rplate})} consists of 30 domains, where the instances of each domain is generated by the same Gaussian distribution but the decision boundary rotates from $0^\circ$ to $348^\circ$ with an interval of $12^\circ$. 
(3) \textbf{Rotated MNIST (RMNIST)} We randomly select only 2400 instances in raw MNIST dataset and split them into 12 domains equally. Then we apply the rotations with degree of $\theta=\{0^\circ, 10^\circ, ..., 110^\circ\}$ on each domain respectively. The amount of samples in each domain is only 200, which makes this task more challenging.
(4) \textbf{Portrait} This task is to classify gender based on the photos of high school seniors across different decades \citep{portrait}. We divided the dataset into 12 domains by year. 
(5) \textbf{Cover Type} data set aims to predict cover type (the predominant kind of tree cover) from 54 strictly cartographic variables. 
To generate evolving domains, we sort the samples by the ascending order of the distance to the water body, as proposed in \citep{usgda}. Then we equally divided the data set into 10 domains by distance. 
(5) \textbf{FMoW} A large satellite image dataset with target detection and classification tasks \citep{christie2018functional} . We select 5 common classes to compose a classfication task. The dataset is divided into 19 domains by time. 

\begin{figure*}[tbp]
		\centering 
		\includegraphics[width=1.0\textwidth]{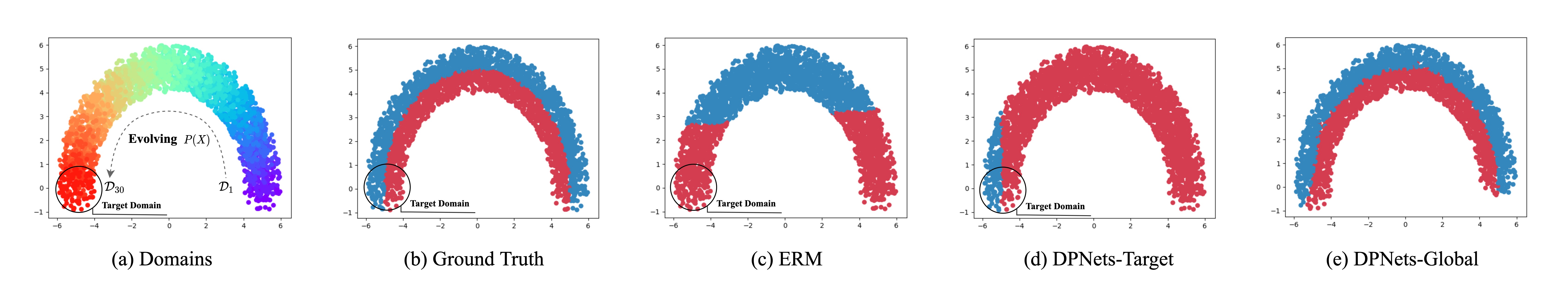}
\caption{Visualization of the EvolCircle data set. (a) 30 domains indexed by different colors, where the left bottom one is target domain.  (b) Positive and negative instances are denoted by red and blue dots respectively. \rebuttal{(c) The decision boundaries learned by ERM. (d) Decision boundaries of last model on all domains. (e) Decision boundaries of models in each domain. (the results of first domain are missing due to the lack of prototypes}.}
\label{fig:evolcircle}
\end{figure*}
\begin{figure*}[tbp]
		\centering 
		\includegraphics[width=1.0\textwidth]{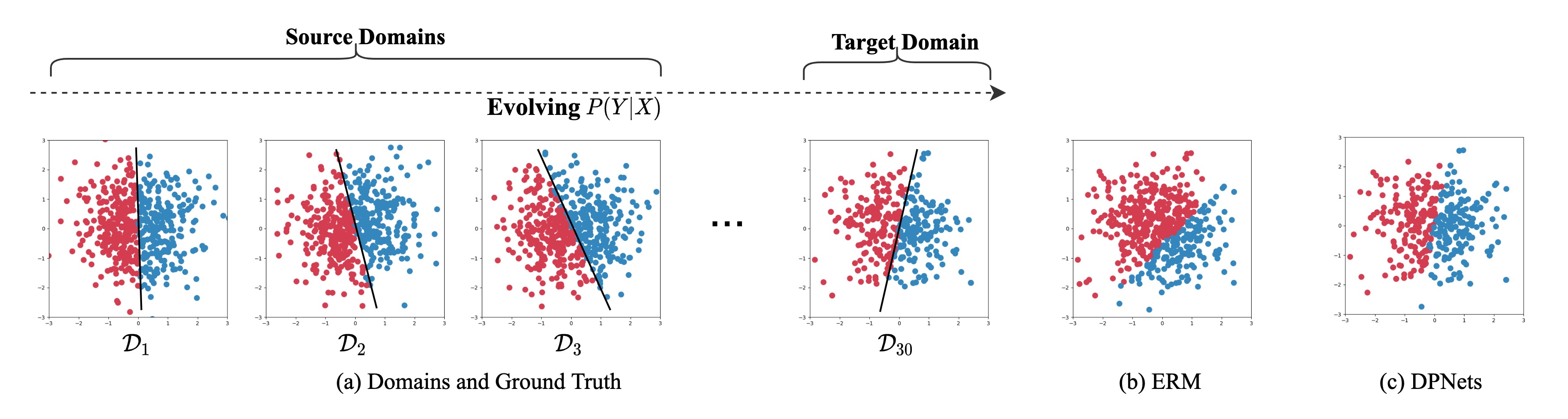}
\caption{Visualization of the RPlate data set. (a) The true decision boundaries evolves over domains. (b) \& (c) The decision boundaries learned by ERM and DPNets on the target domain. }
\label{fig:rplate}
\end{figure*}

We compared the proposed method with the following baselines: (1) \textbf{GroupDRO}~\citep{sagawa2019distributionally}; (2) \textbf{MLDG}~\citep{li2018learning}; (3) \textbf{MMD}~\citep{li2018domain}; (4) \textbf{SagNet}~\citep{nam2021reducing}; (5) \textbf{VREx}~\citep{krueger2021out}; (6) \textbf{SD}~\citep{pezeshki2020gradient}; (7) \textbf{IRM}~\citep{wjd_88}; (8) \textbf{Mixup}~\citep{yan2020improve}; (9) \textbf{CORAL}~\citep{sun2016deep}; (10) \textbf{MTL}~\citep{blanchard2021domain}; (11) \textbf{RSC}~\citep{huang2020self}; (12) \textbf{DIRL}~\citep{dirt}; (13) \textbf{Original Prototypical Network}~\citep{proto}; (14) \textbf{ERM}~\citep{vapnik1998Statistical}.
All the baselines and experiments were implemented with DomainBed package \citep{domainbed} under the same setting, which guarantees extensive and sufficient comparisons. Specifically, for each algorithm and data set, we conduct a random search (Bergstra and Bengio, 2012) of 20 trials over the hyperparameter distribution, and for each hyperparameter, five independent experiments with different random seeds are repeated to reduce the variances. 
Other details of hyperparameters and experimental setup are provided in the appendix. 



\begin{table}[!t]
\caption{Comparison of accuracy (\%) among different methods.  }
\label{tab:all_res}
\begin{center}
\adjustbox{max width=\textwidth}{%
\begin{tabular}{lccccccc}
\toprule
\textbf{Algorithm}         & \textbf{EvolCircle}  & \textbf{RPlate}  & \textbf{RMNIST} & \textbf{Portrait}  & \textbf{Cover Type} & \rebuttal{\textbf{FMoW}} & \textbf{Average}              \\
\midrule
GroupDRO                  & 75.5 $\pm$ 1.0   & 70.0 $\pm$ 4.9   & 76.5 $\pm$ 0.2   & 94.8 $\pm$ 0.1   & 66.4 $\pm$ 0.5   & \rebuttal{57.3 $\pm$ 0.1}   & \rebuttal{73.4}                                  \\
MLDG                      & 91.5 $\pm$ 2.0   & 66.9 $\pm$ 1.8   & 75.0 $\pm$ 0.3   & 66.2 $\pm$ 1.7   & 68.4 $\pm$ 0.7   & \rebuttal{43.8 $\pm$ 0.0}   & \rebuttal{68.6}                                  \\
MMD                       & 86.7 $\pm$ 5.7   & 59.9 $\pm$ 1.4   & 35.4 $\pm$ 0.0   & 95.4 $\pm$ 0.1   & 69.8 $\pm$ 0.4   & \rebuttal{60.0 $\pm$ 0.0}   & \rebuttal{67.8}                                  \\
SagNet                    & 78.7 $\pm$ 3.2   & 63.8 $\pm$ 2.9   & 79.4 $\pm$ 0.1   & 95.3 $\pm$ 0.1   & 65.3 $\pm$ 2.2   & \rebuttal{56.2 $\pm$ 0.1}   & \rebuttal{73.1}                                  \\
VREx                      & 82.9 $\pm$ 6.6   & 61.1 $\pm$ 2.6   & 79.4 $\pm$ 0.1   & 94.3 $\pm$ 0.2   & 66.0 $\pm$ 0.9   & \rebuttal{61.2 $\pm$ 0.0}   & \rebuttal{73.3}                                  \\
SD                        & 81.7 $\pm$ 4.3   & 65.3 $\pm$ 1.4   & 78.8 $\pm$ 0.1   & 95.1 $\pm$ 0.2   & 69.1 $\pm$ 0.9   & \rebuttal{55.2 $\pm$ 0.0}   & \rebuttal{74.2}                                  \\
IRM                       & 86.2 $\pm$ 3.0   & 67.2 $\pm$ 2.1   & 47.5 $\pm$ 0.4   & 94.4 $\pm$ 0.3   & 66.0 $\pm$ 1.0   & \rebuttal{58.8 $\pm$ 0.0}   & \rebuttal{70.0}                                  \\
Mixup                     & 91.5 $\pm$ 2.6   & 66.8 $\pm$ 1.8   & 81.3 $\pm$ 0.2   & \textbf{96.4 $\pm$ 0.2}   & 69.7 $\pm$ 0.6   & \rebuttal{59.5 $\pm$ 0.0}   & \rebuttal{77.5}                                  \\
CORAL                     & 86.8 $\pm$ 5.1   & 61.9 $\pm$ 1.4   & 78.4 $\pm$ 0.1   & 95.1 $\pm$ 0.1   & 68.1 $\pm$ 1.3   & \rebuttal{56.1 $\pm$ 0.0}   & \rebuttal{74.4}                                  \\
MTL                       & 77.7 $\pm$ 2.4   & 66.0 $\pm$ 1.2   & 77.2 $\pm$ 0.0   & 95.4 $\pm$ 0.1   & 69.2 $\pm$ 0.9   & \rebuttal{51.7 $\pm$ 0.0}   & \rebuttal{72.9}                                  \\
RSC                       & 91.5 $\pm$ 2.1   & 67.9 $\pm$ 4.2   & 74.7 $\pm$ 0.1   & 95.5 $\pm$ 0.1   & 69.4 $\pm$ 0.3   & \rebuttal{55.7 $\pm$ 0.1}   & \rebuttal{75.8}                                  \\
\rebuttal{DIRL} & \rebuttal{53.3 $\pm$ 0.2}   & \rebuttal{56.3 $\pm$ 0.4}   & \rebuttal{76.3 $\pm$ 0.3}   & \rebuttal{93.2 $\pm$ 0.2}   & \rebuttal{61.2 $\pm$ 0.3} & \rebuttal{43.4 $\pm$ 0.3}  & \rebuttal{64.0}\\
\rebuttal{Prototypical}& \rebuttal{93.6 $\pm$ 0.5}   & \rebuttal{66.3 $\pm$ 0.4}   & \rebuttal{85.2 $\pm$ 0.4}   & \rebuttal{96.2 $\pm$ 0.3}   & \rebuttal{66.5 $\pm$ 0.4}  & \rebuttal{53.3 $\pm$ 0.2} & \rebuttal{76.9}\\
ERM & 72.7 $\pm$ 1.1   & 63.9 $\pm$ 0.9   & 79.4 $\pm$ 0.0   & 95.8 $\pm$ 0.1   & 71.8 $\pm$ 0.2   & \rebuttal{54.6 $\pm$ 0.1}  & \rebuttal{74.7}                                  \\
DPNets (Ours)& \textbf{94.2 $\pm$ 0.9}   & \textbf{95.0 $\pm$ 0.5}   & \textbf{87.5 $\pm$ 0.1}   & \textbf{96.4 $\pm$ 0.0}   & \textbf{72.5 $\pm$ 1.0} & \rebuttal{\textbf{66.8 $\pm$ 0.1}}  & \rebuttal{\textbf{85.4}}\\
\bottomrule
\end{tabular}}
\end{center}
\end{table}

\subsection{Results and Analysis}

\textbf{Overall Evaluation} The performances of our proposed method and baselines are reported in Table \ref{tab:all_res}. It can be observed that DPNets consistently outperforms other baselines over all the data sets, and achieves $89.1\%$ on average which is significantly higher the other algorithms ($\approx 8\% - 20\%$). The results indicate that existing DG methods cannot deal with domain shifts well while DPNets can properly capture the evolving patterns in the environments. It is also worth noting that directly employing Prototypical Network on our problem setting does not receive good result (81.5\%), which further illustrates the effectiveness of our architectural design.


To further investigate the learning behaviors in evolving environments, we study the synthetic data sets, where the evolving pattern can be manually controlled. Here, we studied two typical evolving scenarios $P(X)$ and $P(Y|X)$, corresponding to the EvolCircle and RPlate data sets respectively. 

\textbf{Evolving $P(X)$ (EvolCircle).} 
The decision boundaries on the unseen target domain $\mathcal{D}_{30}$ learned by ERM and DPNets are shown in Fig.~\ref{fig:evolcircle}(c) and Fig.~\ref{fig:evolcircle}(d) respectively. 
We can observe that DPNets fits the ground truth significantly better than that of ERM. This indicates that our approach can capture the evolving pattern of $P(X)$ according to source domains and then learn a better classifier for the target domain. 
Furthermore, it can also be observed that the decision boundary learned by ERM achieves better performance on the observed source domains. This is because it focuses on improving generalization ability on all source domains, which leads the poor performance on the outer target domain $\mathcal D_{30}$. On the contrary,  DPNets can ``foresee" the prototypes for the target domain, which guarantees a good generalization performance even though it may not perform well on tge source domains. 


\textbf{Evolving $P(Y|X)$ (RPlate).} 
By visualizing the data sets, we can observe that the predicted boundary of DPNets better approximates the ground-truth, compared with the result of ERM. This indicates that our approach can also capture the $P(Y|X)$ evolving pattern. Existing DG methods perform poorly on this data set because the ground truth labeling function varies. Under the evolving labeling functions, even the same instance can have different labels in different domains. Thus, there does not exist a single model that can perform well across  all the domains. For this situation, learning a model specifically for one domain instead of all domains can be a possible solution. DPNets can capture the evolving pattern and produce a  model specifically for the target domain.

\begin{wrapfigure}[16]{r}{0.4\textwidth}
  \begin{center}
    \includegraphics[width=0.32\textwidth]{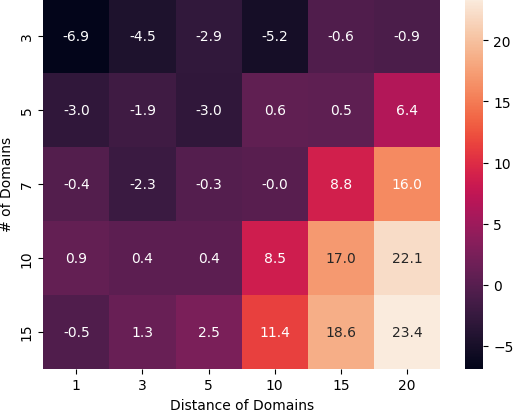}
  \end{center}
  \caption{\rebuttal{Performance of RMNIST w.r.t. different domain numbers and distances.}}
  \label{3d}
\end{wrapfigure}

\textbf{When to apply DPNets?} Existing DG methods assume that the distances among observed and unseen domains does not very large. However, the dissimilarity between domains is a crucial factor which can fundamentally influence the possibility and performance of generalization. To investigate the impact of variances of the environment, we create a series of variations on the raw RMNST data by jointly varying the number of domains (Table \ref{tab:domain_num_RMNIST_table}) and the degree interval (Table \ref{tab:domain_dis_RMNIST_table}) between two consecutive domains. The performance improvement of DPNets over the baseline ERM ($\text{Acc}_{\text{DPNets}} - \text{Acc}_{\text{ERM}}$) is shown in Fig. \ref{3d}. Experimental results indicate that DPNets performs better when the number of domains and the distance between domains increase 
On one hand, greater number of domains and larger distance between them lead to more significant difference across domains. 
This makes traditional DG methods harder to train one model from all domains, but oppositely more domains benefit our DPNets to learn the evolving pattern to achieve better performance.  On the other hand, we observed that the DPNets significantly outperforms other baselines when the number of domains and the distance between domains increase. 

\begin{table}[tbp]
\caption{Comparison of accuracy (\%) of different methods on RMNIST data set with different number of domains.}
\label{tab:domain_num_RMNIST_table}
\begin{center}
\adjustbox{max width=\textwidth}{%
\begin{tabular}{lccccccccc}
\toprule
\multicolumn{1}{c}{\# Domains} & 3                       & 5                       & 7                       & 9                       & 11                      & 13                      & 15                      & 17 &19                      \\
\midrule
Mixup                            & 82.3 $\pm$ 0.3          & \textbf{83.3 $\pm$ 0.3} & \textbf{83.8 $\pm$ 0.6}          & \textbf{83.3 $\pm$ 0.3}          & 80.0 $\pm$ 0.3          & 78.3 $\pm$ 0.3          & 80.0 $\pm$ 0.3          & 77.3 $\pm$ 0.3    & 72.5 $\pm$ 0.3        \\
IRM                              & 46.0 $\pm$ 0.2          & 44.0 $\pm$ 0.0          & 35.6 $\pm$ 0.3          & 46.5 $\pm$ 0.3          & 40.4 $\pm$ 0.4          & 49.6 $\pm$ 0.3          & 46.0 $\pm$ 0.1          & 46.9 $\pm$ 0.3   & 41.3 $\pm$ 0.1         \\
MLDG                             & \textbf{85.0 $\pm$ 0.2 }         & 81.9 $\pm$ 0.4          &82.7 $\pm$ 0.3          & 80.0 $\pm$ 0.6          & 79.0 $\pm$ 0.1          & 74.2 $\pm$ 0.1          & 77.9 $\pm$ 0.3          & 71.7 $\pm$ 0.1  & 68.8 $\pm$ 0.3          \\
ERM                             & 80.0 $\pm$ 0.3          & 81.6 $\pm$ 0.3          & 81.3 $\pm$ 0.1          & 79.7 $\pm$ 0.2          & 79.7 $\pm$ 0.3          & 75.6 $\pm$ 0.3          & 77.8 $\pm$ 0.3          & 69.1 $\pm$ 0.3   & 74.4 $\pm$ 0.1         \\
DPNets (Ours)                              & 83.4 $\pm$ 0.1 & 83.1 $\pm$ 0.3 & 81.1 $\pm$ 0.1 & 82.8 $\pm$ 0.3 & \textbf{88.1 $\pm$ 0.3} & \textbf{\rebuttal{87.3} $\pm$ 0.5} & \textbf{86.6 $\pm$ 0.1} & \textbf{85.6 $\pm$ 0.3} & \textbf{86.3 $\pm$ 0.3}\\
\bottomrule
\end{tabular}
}
\end{center}
\end{table}

\begin{table}[tbp]
\caption{Comparison of accuracy (\%) of different methods on RMNIST data set with different distance between domains.  }
\label{tab:domain_dis_RMNIST_table}
\begin{center}
\adjustbox{max width=\textwidth}{%
\begin{tabular}{lcccccc}
\toprule
\multicolumn{1}{c}{Domain Distance}& $3^\circ$                       & $5^\circ$                       & $7^\circ$                       & $10^\circ$                      & $15^\circ$                      & $20^\circ$                      \\
\midrule
Mixup                               & 92.5 $\pm$ 0.1          & \textbf{91.9 $\pm$ 0.3}          & \textbf{88.4 $\pm$ 0.3}          & 81.3 $\pm$ 0.2          & 73.1 $\pm$ 0.1          & 59.4 $\pm$ 0.3          \\
IRM                                     & 69.4 $\pm$ 0.2          & 63.4 $\pm$ 0.1          & 49.7 $\pm$ 0.0          & 47.5 $\pm$ 0.4          & 35.6 $\pm$ 0.3          & 31.3 $\pm$ 0.3          \\
MLDG                                & 90.9 $\pm$ 0.1          & 87.5 $\pm$ 0.2          & 85.0 $\pm$ 0.2          & 75.0 $\pm$ 0.3          & 71.9 $\pm$ 0.1          & 56.9 $\pm$ 0.3          \\
ERM                                    & 92.2 $\pm$ 0.1          & 88.8 $\pm$ 0.0          & 82.8 $\pm$ 0.1          & 79.4 $\pm$ 0.0          & 66.3 $\pm$ 0.1          & 53.1 $\pm$ 0.3          \\
DPNets (Ours)                             & 91.9 $\pm$ 0.3 & 91.3 $\pm$ 0.3 & \textbf{88.4 $\pm$ 0.3} & \textbf{87.5 $\pm$ 0.1} & \textbf{85.6 $\pm$ 0.3} & \textbf{83.8 $\pm$ 0.3}\\
\bottomrule
\end{tabular}
}
\end{center}
\end{table}

In Table \ref{tab:domain_num_RMNIST_table} and Table \ref{tab:domain_dis_RMNIST_table}, we respectively analyzed the affect of \emph{the domain distance} and \emph{the number of domains} on the generalization ability of different models. We can observe that, with the increasing complexity of source domains, the DPNets benefits a lot from the evolving information and the performance gap between our method and baselines increase. 
In addition, from Table  \ref{tab:domain_num_RMNIST_table} we can find that the performance of traditional DG methods fluctuates when the number of domains increases. As for the DPNets , its performance continuously improves when the domain number increases, since it easily learns the evolving manners from more domains. Please refer Section more discussion about this.
From Table \ref{tab:domain_dis_RMNIST_table}, we can see that when the domain distance increases, the performance of DG methods decreases severely while the performance of DPNets drops slightly. 

In conclusion, the experimental results imply that it is hard for traditional DG methods to solve the EDG problem when the number of domains and distance between domains increase, while our DPNets can still perform well in such a scenario. 

\begin{figure*}[htbp]
		\centering 
		\subfloat[ERM]{\includegraphics[width=0.3\textwidth]{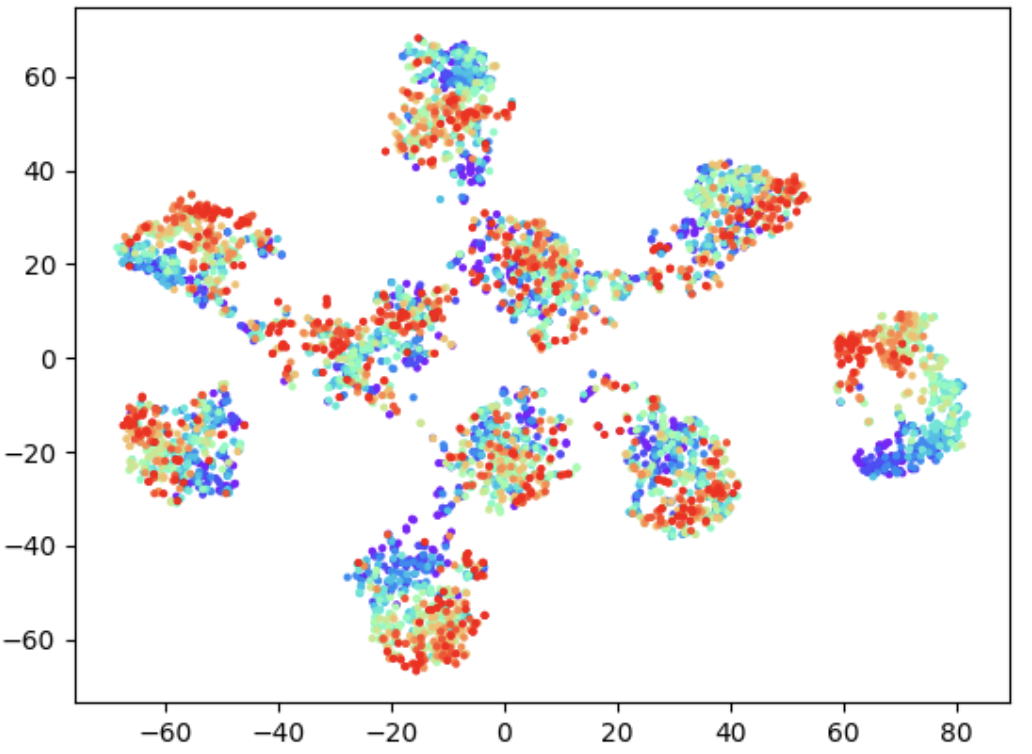}}
		\subfloat[Prototypical Network]{\includegraphics[width=0.3\textwidth]{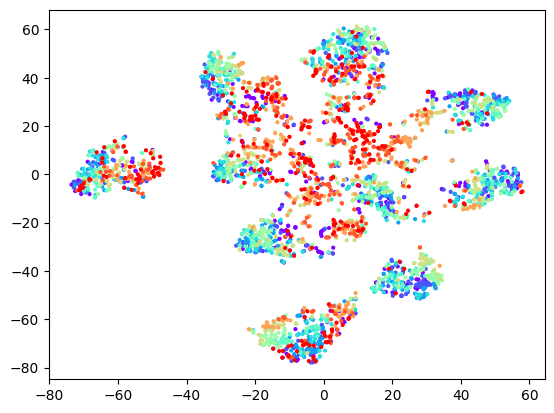}}
		\subfloat[DPNets]{\includegraphics[width=0.3\textwidth]{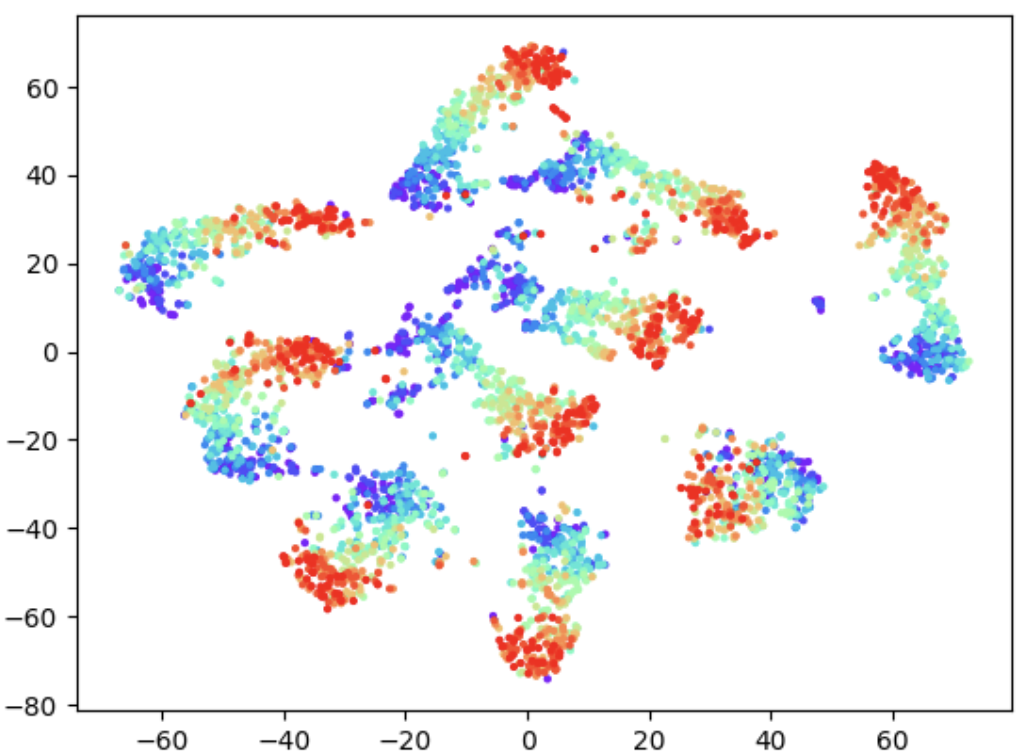}}
\caption{T-SNE Visualization of embedded RMNIST data learned by ERM and DPNets.}
\label{tsne}
\end{figure*}

\textbf{T-SNE visualization over evolving domains.} In this part, we investigate the ability of DPNets and DG methods in distilling domain evolving information. Most domain generalization approaches aim to learn an invariant representation across all domains. While in the EDG scenario, we need to leverage the evolving pattern to improve the generalization process. Here, we use t-SNE to visualize, respectively, the representations learned from the second-to-last layer by ERM, DPNets\rebuttal{, and original Prototypical Network} in Fig. \ref{tsne}. The colors from red to blue correspond to the domains index from 1 to 30. The feature visualizations demonstrate that DPNets can keep the domain evolving even in the last layer of the network, which makes it possible to leverage that knowledge. On the contrary, the evolving pattern learned by \rebuttal{Prototypical Network and ERM is less obvious. The results further prove that ERM and original Prototypical Network can not leverage evolving information well.}

\section{Conclusions}
In this paper, we study the problem of domain generalization in an evolving environment, and propose evolving domain generalization (EDG)  as a general framework to address it. Our theoretical analysis highlights the role of learning a mapping function to capture the evolving pattern over domains. Based on our analysis, we propose directional prototypical networks (DPNets), a simple and efficient algorithm for EDG. Experiments on both synthetic and real-world data sets validate the effectiveness of our method. 



\clearpage

\newpage
\rebuttal{
\section*{Ethics Statement}
This paper presents an algorithm that can exploit evolving information in a continuously changing environment to improve the performance of the model in target domains where data are not available. the proposed approach may also introduce the potential negative impact: the Portrait dataset we use is only intended to demonstrate algorithm's superior performance on classification tasks.
}
\bibliography{example_paper.bib}
\bibliographystyle{ICLR/iclr2021_conference}

\newpage
\appendix

\section{Proofs}

\subsection{Lemma~\ref{lemmastart}}

We first prove an intermediate lemma:

\begin{lemma}\label{lemma1}
Let $z\in\mathcal{Z}=\mathcal{X}\times\mathcal{Y}$ be the real valued integrable random variable, let $P$ and $Q$ are two distributions on a common space $\mathcal{Z}$ such that $Q$ is absolutely continuous w.r.t. $P$. If for any function $f$ and $\lambda\in\R$ such that $\E_P[e^{\lambda(f(z)-\E_P(f(z))}]<\infty$, then we have:
\begin{equation*}
   \lambda (\E_{Q} f(z) - \E_{P}f(z)) \leq D_{\text{KL}}(Q\|P) + \log \E_P[e^{\lambda(f(z)-\E_P(f(z))}],
\end{equation*}
where $D_{\text{KL}}(Q\|P)$ is the Kullback–Leibler divergence between distribution $Q$ and $P$, and the equality arrives when $f(z)= \E_{P} f(z) + \frac{1}{\lambda}\log(\frac{d Q}{d P})$.
\end{lemma}

\begin{proof}
We let $g$ be {any} function such that $\E_P[e^{g(z)}]<\infty$, then we define a random variable $Z_g(z) = \frac{e^{g(z)}}{\E_P[e^{g(z)}]}$, then we can verify that $\E_{P}(Z_g) =1$. We assume another distribution $Q$ such that $Q$ (with distribution density $q(z)$) is absolutely continuous w.r.t. $P$ (with distribution density $p(z)$), then we have:
\begin{equation*}
\begin{split}
     \E_{Q}[\log Z_g] & = \E_{Q}[\log\frac{q(z)}{p(z)} + \log(Z_g\frac{p(z)}{q(z)})]  = D_{\text{KL}}(Q\|P) + \E_{Q}[\log(Z_g\frac{p(z)}{q(z)})]\\
     & \leq D_{\text{KL}}(Q\|P) + \log\E_{Q}[\frac{p(z)}{q(z)}Z_g]= D_{\text{KL}}(Q\|P) + \log \E_{P}[Z_g]
\end{split}
\end{equation*}
Since $\E_{P}[Z_g] = 1$ and according to the definition we have $\E_{Q}[\log Z_g] = \E_{Q}[g(z)] - \E_{Q}\log\E_{P}[e^{g(z)}] = \E_{Q}[g(z)] - \log\E_{P}[e^{g(z)}]$ (since $\E_{P}[e^{g(z)}]$ is a constant w.r.t. $Q$) and we therefore have:
\begin{equation}
    \E_{Q}[g(z)] \leq  \log\E_{P}[e^{g(z)}] + D_{\text{KL}}(Q\|P)
    \label{change_of_measure}
\end{equation}
Since this inequality holds for any function $g$ with finite moment generation function, then we let $g(z) = \lambda( f(z)-\E_P f(z))$ such that $\E_P[e^{f(z)-\E_P f(z)}]<\infty$. Therefore we have $\forall \lambda$ and $f$ we have:
\begin{equation*}
    \E_{Q}\lambda(f(z)-\E_P f(z)) \leq D_{\text{KL}}(Q\|P) + \log \E_P[e^{\lambda(f(z)-\E_P f(z)}]
\end{equation*}
Since we have $\E_{Q}\lambda(f(z)-\E_P  f(z)) = \lambda \E_{Q} (f(z)-\E_P f(z))) = \lambda (\E_{Q} f(z) - \E_{P} f(z))$, therefore we have:
\begin{equation*}
   \lambda (\E_{Q} f(z) - \E_{P} f(z)) \leq D_{\text{KL}}(Q\|P) + \log \E_P[e^{\lambda (\E_{Q} f(z) - \E_{P} f(z))}]
\end{equation*}
As for the attainment in the equality of Eq.(\ref{change_of_measure}), we can simply set  $g(z) = \log(\frac{q(z)}{p(z)})$, then we can compute $\E_{P}[e^{g(z)}]=1$ and the equality arrives. Therefore in Lemma 1, the equality reaches when $\lambda(f(z)- \E_{P} f(z)) = \log(\frac{d Q}{d P})$.
\end{proof}
In the classification problem, we define the observation pair $z=(x,y)$. We also define the loss function $\ell(z)=L\circ h(z)$ with deterministic hypothesis $h$ and prediction loss function $L$. Then for abuse of notation, we simply denote the loss function $\ell(z)$ in this part.

Given Lemma~\ref{lemma1}, we are ready to prove Lemma~\ref{lemmastart}.
\begin{proof}
According to Lemma~\ref{lemma1}, $\forall \lambda>0$ we have: 
\begin{equation}
    \E_Q f(z) - \E_P f(z) \leq \frac{1}{\lambda} (\log\E_{P}~e^{[\lambda(f(z)-\E_{P}f(z))]} + D_{\text{KL}}(Q\|P))
    \label{sub_1}
\end{equation}

\noindent And $\forall \lambda<0$ we have:
\begin{equation}
    \E_Q f(z) - \E_P f(z) \geq \frac{1}{\lambda} (\log\E_{P}~e^{[\lambda(f(z)-\E_{P}f(z))]} + D_{\text{KL}}(Q\|P))
    \label{sub_2}
\end{equation}

Then we introduce an intermediate distribution $\mathcal{M}(z) = \frac{1}{2}(\mathcal{D}(z) + \mathcal{D}'(z))$, then $\text{supp}(\mathcal{D})\subseteq\text{supp}(\mathcal{M})$ and $\text{supp}(\mathcal{D}')\subseteq\text{supp}(\mathcal{M})$, and let $f=\ell$. Since the random variable $\ell$ is bounded through $G =  \max(\ell) -\min(\ell)$, then according to \cite{wainwright2019high} (Chapter 2.1.2), $\ell-\E_{P}\ell$ is sub-Gaussian with parameter at most $\sigma = \frac{G}{2}$, then we can apply Sub-Gaussian property to bound the $\log$ moment generation function:
\begin{equation*}
    \log\E_{P}~e^{[\lambda(\ell(z)-\E_{P}\ell(z))]} \leq \log e^{\frac{\lambda^2\sigma^2}{2}} \leq \frac{\lambda^2G^2}{8}.
\end{equation*}

\noindent In Eq.(\ref{sub_1}), we let $Q = \mathcal{D}'$ and $P=\mathcal{M}$, then $\forall \lambda>0$ we have:
\begin{equation}
    \E_{\mathcal{D}'}~\ell(z) - \E_{\mathcal{M}}~\ell(z) \leq \frac{G^2\lambda}{8} +  \frac{1}{\lambda}D_{\text{KL}}(\mathcal{D}'\|\mathcal{M})
    \label{sub_3}
\end{equation}

\noindent In Eq.(\ref{sub_2}), we let $Q = \mathcal{D}$ and $P=\mathcal{M}$, then $\forall \lambda<0$ we have:
\begin{equation}
    \E_{\mathcal{D}}~\ell(z) - \E_{\mathcal{M}}~\ell(z) \geq \frac{G^2\lambda}{8} +  \frac{1}{\lambda}D_{\text{KL}}(\mathcal{D}\|\mathcal{M})
    \label{sub_4}
\end{equation}

\noindent In Eq.(\ref{sub_3}), we denote $\lambda=\lambda_0>0$ and $\lambda=-\lambda_0<0$ in Eq.(\ref{sub_4}).
Then Eq.(\ref{sub_3}),  Eq.(\ref{sub_4}) can be reformulated as:
\begin{equation}
\begin{split}
    & \E_{\mathcal{D}'}~\ell(z) - \E_{\mathcal{M}}~\ell(z) \leq \frac{G^2\lambda_0}{8} +  \frac{1}{\lambda_0}D_{\text{KL}}(\mathcal{D}'\|\mathcal{M})\\
    & \E_{\mathcal{M}}~\ell(z) - \E_{\mathcal{D}}~\ell(z) \leq \frac{G^2\lambda_0}{8} +  \frac{1}{\lambda_0}D_{\text{KL}}(\mathcal{D}\|\mathcal{M})
\end{split}
    \label{sub_5}
\end{equation}
Adding the two inequalities in Eq.(\ref{sub_5}), we therefore have:
\begin{equation}
    \E_{\mathcal{D}'}~\ell(z)  \leq \E_{\mathcal{D}}~\ell(z) + \frac{1}{\lambda_0} \big(D_{\text{KL}}(\mathcal{D}\|\mathcal{M}) + D_{\text{KL}}(\mathcal{D}'\|\mathcal{M}) \big) + \frac{\lambda_0}{4}G^2 
\end{equation}
Since the inequality holds for $\forall \lambda_0$, then by taking $\lambda_0 = \frac{2}{G}\sqrt{D_{\text{KL}}(\mathcal{D}\|\mathcal{M}) + D_{\text{KL}}(\mathcal{D}'\|\mathcal{M})}$ we finally have:
\begin{equation}
      \E_{\mathcal{D}'}~\ell(z)  \leq \E_{\mathcal{D}}~\ell(z) + \frac{G}{\sqrt{2}}\sqrt{D_{\text{JS}}(\mathcal{D}'\|\mathcal{D})}
    \label{sub_6}
\end{equation}
Let $\mathcal{D}' = \mathcal{D}_t$ and $\mathcal{D} = \mathcal{D}_t^g$, we complete our proof.
\end{proof}

\subsection{Theorem~\ref{theoremds}}


\begin{proof}

According to Definition of $\lambda$-consistency, we have:

\begin{align*}
d_{JS}(\mathcal D^{g^*}_t||\mathcal D_t)\leq d_{JS}(\mathcal D^{g^*}_2||\mathcal D_2)+|d_{JS}(\mathcal D^{g^*}_t||\mathcal D_t)-d_{JS}(\mathcal D_2^{g^*}||\mathcal D_2)|\leq d_{JS}(\mathcal D^{g^*}_2||\mathcal D_2)+\lambda
\end{align*}

Similarly, we have the followings:
\begin{align*}
&d_{JS}(\mathcal D^{g^*}_t||\mathcal D_t)\leq d_{JS}(\mathcal D^{g^*}_i||\mathcal D_i)+|d_{JS}(\mathcal D^{g^*}_t||\mathcal D_t)-d_{JS}(\mathcal D_i^{g^*}||\mathcal D_i)|\leq d_{JS}(\mathcal D^{g^*}_i||\mathcal D_i)+\lambda \\
& \hspace{180pt} \cdots\\
&d_{JS}(\mathcal D^{g^*}_t||\mathcal D_t)\leq d_{JS}(\mathcal D^{g^*}_m||\mathcal D_m)+|d_{JS}(\mathcal D^{g^*}_t||\mathcal D_t)-d_{JS}(\mathcal D_m^{g^*}||\mathcal D_m)|\leq d_{JS}(\mathcal D^{g^*}_m||\mathcal D_m)+\lambda,
\end{align*}
which gives us
\begin{align*}
d_{JS}(\mathcal D_t^{g^*}||\mathcal D_t)\leq \frac{1}{m-1}\sum _{i=2}^{m}d_{JS}(\mathcal D_i^{g^*}||\mathcal D_{i})+\lambda
\end{align*}
Then, according to Lemma~\ref{lemmastart}, we have
\begin{align*}
R_{\mathcal D_{t}}(h)&\leq R_{\mathcal D_t ^{g^*}}(h)+\frac{G}{\sqrt{2}}\sqrt{d_{JS}(\mathcal D_t^{g^*}||\mathcal D_t)}\\
&\leq R_{\mathcal D_t ^{g^*}}(h)+\frac{G}{\sqrt{2}}\sqrt{\frac{1}{m-1}\sum _{i=2}^{m}d_{JS}(\mathcal D_i^{g^*}||\mathcal D_{i})+\lambda}\\
& \le R_{\mathcal{D}_t^{g^*}}(h)+\frac{G}{\sqrt{2(m-1)}}\Bigg(\sqrt{\sum _{i=2}^{m}d_{JS}(\mathcal D_{i}^{g^*}||\mathcal D_{i})}+\sqrt{(m-1)\lambda}\Bigg)
\end{align*}
\end{proof}






\subsection{Corollary~\ref{corollay1}}
We first introduce the upper bound for Jensen Shannon (JS) Divergence decomposition:
\begin{lemma}
\label{jsd_decompose}
Let $\mathcal D(x,y)$ and $\mathcal D'(x,y)$ be two distributions over $\mathcal X \times \mathcal Y$, $\mathcal D(y)$ and $\mathcal D'(y)$ be the corresponding marginal distribution of $y$, $\mathcal D(x|y)$ and $\mathcal D'(x|y)$ be the corresponding conditional distribution given $y$, then we can get the following bound,
\begin{align*}
d_{JS}(\mathcal D(x,y)||\mathcal D'(x,y))& \leq  \\
& \hspace{-54pt} d_{JS}(\mathcal D(y)||\mathcal D'(y))+\mathbb E_{y\sim \mathcal D(y)}d_{JS}(\mathcal D(x|y)||\mathcal D'(x|y))+\mathbb E_{y\sim \mathcal D'(y)}d_{JS}(\mathcal D(x|y)||\mathcal D'(x|y))
\end{align*}
\end{lemma}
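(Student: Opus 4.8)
The plan is to prove the decomposition through a variational (``minimizer'') characterization of the Jensen--Shannon divergence combined with a hand-picked product reference measure, which lets the Kullback--Leibler chain rule produce the symmetric conditional terms we want. Write $\mathcal D,\mathcal D'$ for the two joints and let $\mathcal M=\tfrac12(\mathcal D+\mathcal D')$ be their mixture, so that $d_{JS}(\mathcal D\|\mathcal D')=\tfrac12 D_{\mathrm{KL}}(\mathcal D\|\mathcal M)+\tfrac12 D_{\mathrm{KL}}(\mathcal D'\|\mathcal M)$. First I would record the elementary identity valid for every distribution $\mathcal N$,
\begin{align*}
\tfrac12 D_{\mathrm{KL}}(\mathcal D\|\mathcal N)+\tfrac12 D_{\mathrm{KL}}(\mathcal D'\|\mathcal N)=d_{JS}(\mathcal D\|\mathcal D')+D_{\mathrm{KL}}(\mathcal M\|\mathcal N),
\end{align*}
which follows by writing the left side minus $d_{JS}$ as $\tfrac12\sum(\mathcal D+\mathcal D')\log(\mathcal M/\mathcal N)=D_{\mathrm{KL}}(\mathcal M\|\mathcal N)$. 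By nonnegativity of KL this yields $d_{JS}(\mathcal D\|\mathcal D')\le \tfrac12 D_{\mathrm{KL}}(\mathcal D\|\mathcal N)+\tfrac12 D_{\mathrm{KL}}(\mathcal D'\|\mathcal N)$, i.e. I may replace the true mixture $\mathcal M$ by any reference $\mathcal N$ at the cost of an inequality.

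The reason this replacement is essential is the main obstacle: the conditional of the joint mixture is \emph{not} the balanced mixture of conditionals, since $\mathcal M(x\mid y)=\alpha_y\,\mathcal D(x\mid y)+(1-\alpha_y)\mathcal D'(x\mid y)$ with the $y$-dependent weight $\alpha_y=\mathcal D(y)/(\mathcal D(y)+\mathcal D'(y))$. Applying the chain rule directly to $D_{\mathrm{KL}}(\mathcal D\|\mathcal M)$ would therefore generate skewed conditional divergences rather than the symmetric conditional JS terms in the target. To sidestep this I would take the product-form reference $\mathcal N(x,y)=\mathcal M(y)\,\overline{\mathcal M}_y(x\mid y)$, where $\mathcal M(y)=\tfrac12(\mathcal D(y)+\mathcal D'(y))$ is the true marginal mixture and $\overline{\mathcal M}_y(x\mid y)=\tfrac12(\mathcal D(x\mid y)+\mathcal D'(x\mid y))$ is the \emph{balanced} conditional mixture. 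Because $\mathcal N$ factorizes, the KL chain rule gives
\begin{align*}
D_{\mathrm{KL}}(\mathcal D\|\mathcal N)=D_{\mathrm{KL}}(\mathcal D(y)\|\mathcal M(y))+\mathbb E_{y\sim\mathcal D(y)}\,D_{\mathrm{KL}}(\mathcal D(x\mid y)\|\overline{\mathcal M}_y),
\end{align*}
and symmetrically for $\mathcal D'$. Substituting both into the variational bound, the two marginal KL terms combine into exactly $d_{JS}(\mathcal D(y)\|\mathcal D'(y))$, and there remains the conditional contribution $\tfrac12\mathbb E_{y\sim\mathcal D(y)}D_{\mathrm{KL}}(\mathcal D(x\mid y)\|\overline{\mathcal M}_y)+\tfrac12\mathbb E_{y\sim\mathcal D'(y)}D_{\mathrm{KL}}(\mathcal D'(x\mid y)\|\overline{\mathcal M}_y)$.

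To finish I would complete each conditional term to a full conditional JS divergence by reinstating the missing nonnegative cross term. Since $d_{JS}(\mathcal D(x\mid y)\|\mathcal D'(x\mid y))=\tfrac12 D_{\mathrm{KL}}(\mathcal D(x\mid y)\|\overline{\mathcal M}_y)+\tfrac12 D_{\mathrm{KL}}(\mathcal D'(x\mid y)\|\overline{\mathcal M}_y)$, dropping one nonnegative half-term shows $\tfrac12\mathbb E_{y\sim\mathcal D(y)}D_{\mathrm{KL}}(\mathcal D(x\mid y)\|\overline{\mathcal M}_y)\le\mathbb E_{y\sim\mathcal D(y)}\,d_{JS}(\mathcal D(x\mid y)\|\mathcal D'(x\mid y))$, and likewise for the $\mathcal D'(y)$-weighted term. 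Adding these two estimates to the marginal term produces precisely the claimed bound. I expect the only genuine difficulty is the mixture-conditional mismatch flagged above; once the product reference $\mathcal N$ is chosen the remaining steps are routine chain-rule bookkeeping plus discarding nonnegative terms, and the argument is insensitive to the normalizing constant in the definition of $d_{JS}$ because any global constant multiplies both sides equally.
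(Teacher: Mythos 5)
Your proof is correct, and it does more than restate the paper's argument in different notation: it quietly repairs it. The paper's proof has the same skeleton---pass to the joint mixture $\mathcal M(x,y)=\frac12(\mathcal D(x,y)+\mathcal D'(x,y))$, write $2\,d_{JS}(\mathcal D\|\mathcal D')=d_{KL}(\mathcal D\|\mathcal M)+d_{KL}(\mathcal D'\|\mathcal M)$, chain-rule each KL into a marginal plus a conditional part, recognize the marginal parts as $2\,d_{JS}(\mathcal D(y)\|\mathcal D'(y))$, and bound the conditional parts---but its final step asserts the identity
\begin{align*}
d_{KL}(\mathcal D(x|y)\|\mathcal M(x|y))+d_{KL}(\mathcal D'(x|y)\|\mathcal M(x|y))=2\,d_{JS}(\mathcal D(x|y)\|\mathcal D'(x|y)),
\end{align*}
where $\mathcal M(x|y)$ is the conditional of the joint mixture. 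That is exactly the mixture-conditional mismatch you flagged: since $\mathcal M(x|y)=\alpha_y\,\mathcal D(x|y)+(1-\alpha_y)\,\mathcal D'(x|y)$ with $\alpha_y=\mathcal D(y)/(\mathcal D(y)+\mathcal D'(y))$, the displayed identity holds essentially only where $\alpha_y=\frac12$, and by the variational property of JS (the \emph{balanced} mixture minimizes the sum of the two KLs) the true relation is ``$\ge$''---an inequality in the useless direction, so the paper's chain of estimates does not go through as written. Concretely, if the two conditionals have disjoint supports, the left side equals $\log\frac{1}{\alpha_y(1-\alpha_y)}$, which strictly exceeds the right side $2\log 2$ whenever $\alpha_y\neq\frac12$. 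Your detour through the compensation identity $\frac12 d_{KL}(\mathcal D\|\mathcal N)+\frac12 d_{KL}(\mathcal D'\|\mathcal N)=d_{JS}(\mathcal D\|\mathcal D')+d_{KL}(\mathcal M\|\mathcal N)$, instantiated at the product reference $\mathcal N(x,y)=\mathcal M(y)\,\overline{\mathcal M}_y(x|y)$, supplies precisely the missing ingredient: it lets you run the chain rule against the balanced conditional mixture $\overline{\mathcal M}_y$, for which every subsequent identity (marginal terms summing to the marginal JS, conditional completion to conditional JS) is genuine, at the cost of discarding the nonnegative term $d_{KL}(\mathcal M\|\mathcal N)$. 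In short, the paper's proof is shorter but hinges on an equality that is false for unequal marginals; your version costs one extra lemma and is valid in the generality the statement actually claims.
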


\begin{proof}

Let $\mathcal M(x,y)=\frac{1}{2}(\mathcal D(x,y)+\mathcal D'(x,y))$, then we have
\begin{align*}  
2\cdot d_{JS}(\mathcal D(x,y)||\mathcal D'(x,y))&=d_{KL}(\mathcal D(x,y)||\mathcal M(x,y))+d_{KL}(\mathcal D'(x,y)||\mathcal M(x,y))\\&=d_{KL}(\mathcal D(y)||\mathcal M (y))+\mathbb E_{y\sim \mathcal D(y)} d_{KL}(\mathcal D(x|y)||\mathcal M(x|y))\\&+d_{KL}(\mathcal D'(y)||\mathcal M(y))+\mathbb E_{y\sim \mathcal D'(y)} d_{KL}(\mathcal D(x|y)||\mathcal M(x|y))\\&=2\cdot d_{JS}(\mathcal D(y)||\mathcal D'(y))+\mathbb E_{y\sim \mathcal D(y)} d_{KL}(\mathcal D(x|y)||\mathcal M(x|y))\\&+\mathbb E_{y\sim \mathcal D'(y)} d_{KL}(\mathcal D(x|y)||\mathcal M(x|y))
\end{align*}

To bound the last two terms with JS divergence, we have:

\begin{align}\label{eqq1}
d_{KL}(\mathcal D(x|y)||\mathcal M(x|y))&\leq d_{KL}(\mathcal D(x|y)||\mathcal M(x|y))+d_{KL}(\mathcal D'(x|y)||\mathcal M(x|y))\\&=2\cdot d_{JS}(\mathcal D(x|y)||\mathcal D'(x|y)). \nonumber
\end{align} 
Also,
\begin{align}\label{eqq2}
d_{KL}(\mathcal D'(x|y)||\mathcal M(x|y))&\leq d_{KL}(\mathcal D(x|y)||\mathcal M(x|y))+d_{KL}(\mathcal D'(x|y)||\mathcal M(x|y))\\&=2\cdot d_{JS}(\mathcal D(x|y)||\mathcal D'(x|y)). \nonumber
\end{align}
Combining (\ref{eqq1}) and (\ref{eqq2}) gives us
\begin{align*}
d_{JS}(\mathcal D(x,y)||\mathcal D'(x,y))& \leq  \\
& \hspace{-54pt} d_{JS}(\mathcal D(y)||\mathcal D'(y))+\mathbb E_{y\sim \mathcal D(y)}d_{JS}(\mathcal D(x|y)||\mathcal D'(x|y))+\mathbb E_{y\sim \mathcal D'(y)}d_{JS}(\mathcal D(x|y)||\mathcal D'(x|y)),
\end{align*}
which concludes the proof.
\end{proof}
Given Lemma~\ref{jsd_decompose}, we are ready to prove Corollary~\ref{corollay1}.

\begin{proof}

\begin{align*}   R_{\mathcal{D}_t}(h) &\le R_{\mathcal D_t ^{g^*}}(h)+\frac{G}{\sqrt{2(m-1)}}\sqrt{\sum_{i=2}^{m}{d_{JS}(\mathcal D_i^{g^*}||\mathcal D_{i})}}+G\sqrt{\frac{\lambda}{2}}\\&\le R_{\mathcal D_t ^{g^*}}(h)+G\sqrt{\frac{\lambda}{2}}+\frac{G}{\sqrt{2(m-1)}}\cdot \\&\sqrt{\sum_{i=2}^{m}{d_{JS}(\mathcal D_i^{g^*}(y)||\mathcal D_i(y))+\mathbb E_{y\sim \mathcal D_i(y)}d_{JS}(\mathcal D_i^{g^*}(x|y)||\mathcal D_i(x|y))+\mathbb E_{y\sim \mathcal D_i^{g^*}(y)}d_{JS}(\mathcal D_i^{g^*}(x|y)||\mathcal D_i(x|y))}}\\
&\le R_{\mathcal D_t^{g^*}}(h)+\frac{G}{\sqrt{2(m-1)}}\Bigg({\sqrt{\sum _{i=2}^{m}d_{JS}(\mathcal D^{g^*}_{i}(y)||\mathcal D_{i}(y))}} + \sqrt{(m-1)\lambda}\\
& \hspace{24pt}+{\sqrt{\sum _{i=2}^{m}\mathbb E_{y\sim \mathcal D_{i}^{g^*}(y)}d_{JS}(\mathcal D^{g^*}_{i}(x|y)||\mathcal D_{i}(x|y))}}+{\sqrt{\sum _{i=2}^{m}\mathbb E_{y\sim \mathcal D_{i}(y)}d_{JS}(\mathcal D^{g^*}_{i}(x|y)||\mathcal D_{i}(x|y))}}\Bigg).
\end{align*}

\end{proof}]

\rebuttal{

\rebuttal{
\subsection{Comparison to the Assumptions of Existing Studies}

Learning in a non-stationary environment is impossible if no assumption is imposed on the environment. Existing theoretical studies of evolving domain adaptation have made various assumptions on the evolving pattern of the environment to obtain meaningful results. Specifically, \cite{usgda} assumes that $\rho(\mathcal D_t,\mathcal D_{t+1})<\epsilon$, where $\rho(\cdot,\cdot)$ is some distance measurement of distribution, and the assumption in \cite{laed} is $d_{\mathcal H\Delta\mathcal H}(\mathcal D_{t_1},\mathcal D_{t_2})\leq \alpha|t_1-t_2|$. In other words, they both assume that the distance between two consecutive domains is small. Although such an assumption seems intuitive and reasonable, there are two fundamental issues:

\begin{enumerate}
  \item {\bf Too restrictive for real-world scenarios.}  In many problems, the distance between two domains can be much larger than the difference of domain indices. For example, a small angular rotation may result in a large difference of the pixel-level data distribution. Existing assumptions will fail to characterize such a scenario since both $\rho$ and $d_{\mathcal H\Delta\mathcal H}$ can be quite large, but this problem is still learnable in practice.
  \item {\bf Not taking the algorithm into account.}  Both $\rho$ and $d_{\mathcal H\Delta\mathcal H}$ are algorithm-independent in the sense that they only characterize the nature of an environment itself but does not involve any specific learning algorithm. Consequently, these assumptions cannot directly motivate any concrete strategies for learning the evolving pattern. 
\end{enumerate}

In contrast, our notion of $\lambda$-consistency: $|d_{JS}(\mathcal D_i^{g^*}||\mathcal D_i)-d_{JS}(\mathcal D_j^{g^*}||\mathcal D_j)| \le \lambda$ offers natural solutions to these issues: 
\begin{enumerate}
    \item It reveals that what really matters is not the distance between two consecutive domains but the \emph{stability} (predictability) of the evolving pattern of an environment. Specifically, if the evolving pattern of an environment is stable (not necessarily slow), there will exist a mapping function such that $\lambda$ is small. In other words, our notion indicates that generalization performance can still be guaranteed even though the distance between two consecutive domains is large, as long as $\lambda$ is small.
    \item It also highlights the role of the mapping function $g$. Since $g^*$ is unknown in practice, one primary objective of a EDG algorithm is essentially to minimize the distance between the real and mapped domains. In our implementation (i.e., DPNets), this objective is realized by estimating the prototypes of the next domain by leveraging the instances from the previous domain. Other realizations are also possible, which opens up avenues for future work.
\end{enumerate}

}

\section{Additional Experiments}
\subsection{Further Investigation of Interpolation and Extrapolation}

In Table \ref{tab:domain_num_RMNIST_table}, we can observe that the performances of ERM are not improved as the number of domains increases, which is counter-intuitive. We speculate that this is due to the ``extrapolation" nature of EDG.  To further investigate its impact on the generalization performance on the target domain, we compare the following three settings on the RMNIST dataset: 

\begin{enumerate}
  \item \textbf{DPNets-Evolving (Extrapolation).} Same as DPNets in Section~\ref{sec:exp}, where the target domain $\mathcal{D}_t = \mathcal{D}_{i+1}$.
  \item \textbf{ERM-Evolving (Extrapolation).} Same as ERM in Section~\ref{sec:exp}, where the target domain $\mathcal{D}_t = \mathcal{D}_{i+1}$.
  \item \textbf{ERM-Interpolation.} The ERM approach using the domain in the middle as the target domain, and the rest domains as the source domains. 
\end{enumerate}

Note that (1) and (2) are different algorithms with the same problem setup, and (2) and (3) use the same algorithm but with different problem setups.

We vary the the numbers of domain numbers and domain distances, and the results are reported in Fig.~\ref{erm_distance}, from which we have the following observations:

\begin{enumerate}
    \item The overall trend of the DPNets  is  going up as the number of domains increases. 
    \item The performances of ERM-Evolving do not increase as a function of the number of domain distance, which is consistent with the results in Table~\ref{tab:domain_num_RMNIST_table}.
    \item The performances of ERM-Interpolation increase as a function of the number of domain distance
    \item The improvements of  DPNets and ERM-Interpolation are not obvious once having sufficient amount of domains (e.g., \# of domains = 7 for ERM-Interpolation). We conjecture that it is because the evolving pattern of RMNIST is relatively simple. Thus, a small number of domains are sufficient to learn such a pattern, and increasing the number of domains may not necessarily improve the performances of DPNets and  ERM-Interpolation anymore. 
\end{enumerate}

The results indicate for extrapolation, having more domains does not necessarily help learn an invariant representation if we do not leverage the evolving pattern. Intuitively, as the target domain is on the ``edge" of the domains, having more domains also indicates that it is further away from the “center” of the source domains, which may even make the generalization even more challenging. On the other hand, if the target domain is ``among" the source domains (i.e., when we perform ``interpolation"), the source domains may act as augmented data which improve the generalization performance. In other words, if the more source domains will be beneficial for ``interpolation" but not necessarily for ``extrapolation" if the evolving pattern is not properly exploited.



\begin{figure*}[t]
		\centering 
		\subfloat[Distance = $3^\circ$]{\includegraphics[width=0.25\textwidth]{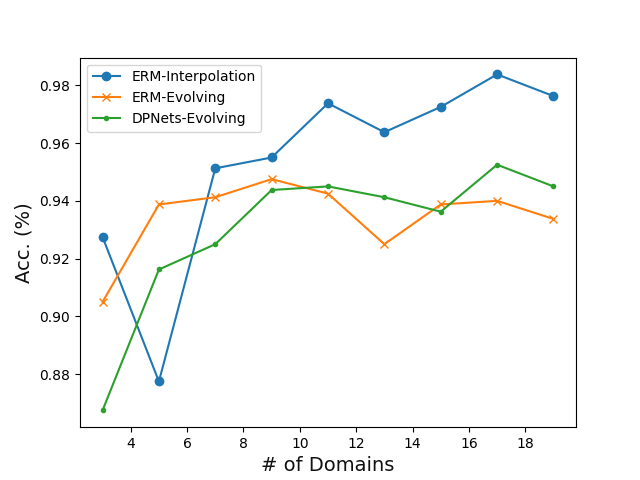}\label{erm_distance:3}}
		\subfloat[Distance = $7^\circ$]{\includegraphics[width=0.25\textwidth]{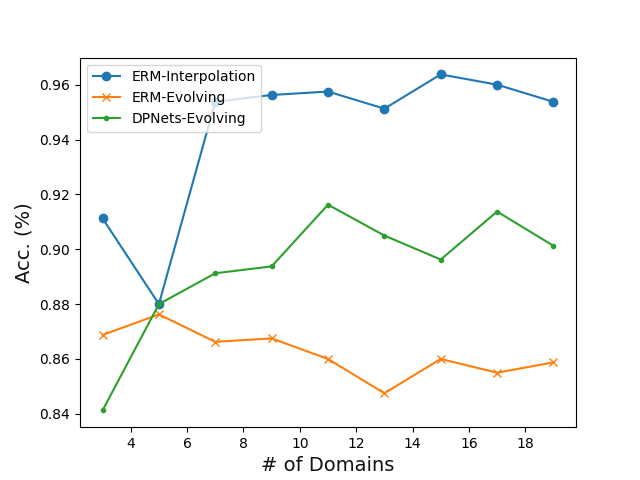}\label{erm_distance:7}}
		\subfloat[Distance = $11^\circ$]{\includegraphics[width=0.25\textwidth]{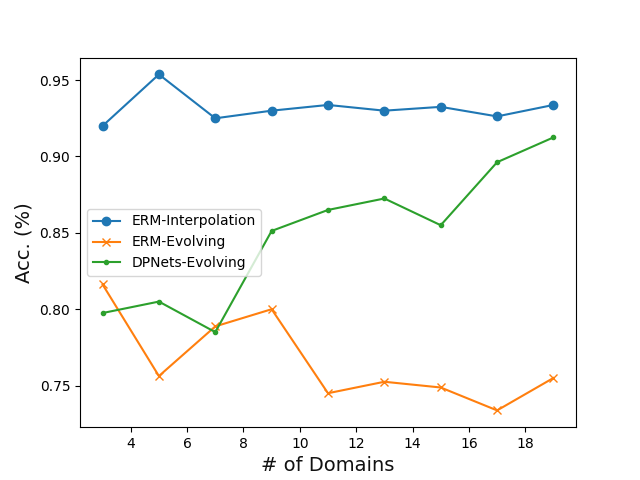}\label{erm_distance:11}}
		\subfloat[Average]{\includegraphics[width=0.25\textwidth]{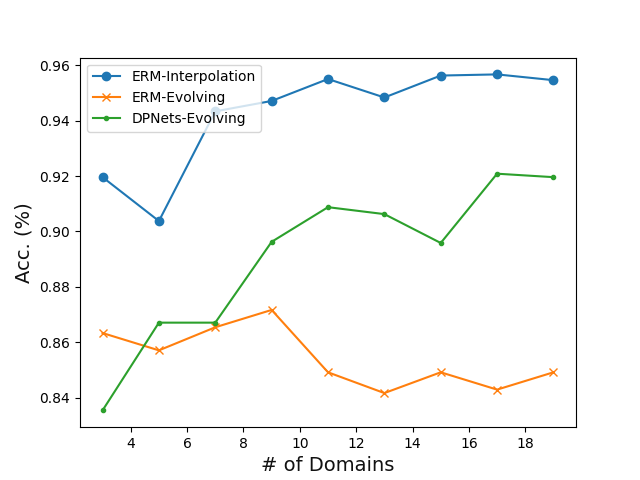}\label{erm_distance:ave}}
\caption{Performance of algorithms when numbers of domains changes.}
\label{erm_distance}
\end{figure*}

}

\rebuttal{

\subsection{Incorporating Domain Information into ERM.}

The ERM in Section~\ref{sec:exp} does not leverage the index information of the source domains. In order to make a more fair comparison, we incorporate the index information into ERM. Specifically, we investigate three strategies for incorporating the index information used in the literature: (1) Index Concatenation (Fig. \ref{index_info:one}), where the domain index is directly concatenated as a one-dimension feature \citep{li2021learning}; (2) One-hot Concatenation (Fig. \ref{index_info:two}), where the domain index is first one-hot encoded and then concatenated to the original features \citep{long2017conditional}; (3) Outer product (Fig. \ref{index_info:three}), where flattened the outer product of original features and the one-hot indexes is used as the final input \citep{shui2021benefits}.


\begin{figure*}[b]
		\centering 
		\subfloat[Index Concatenation ]{\includegraphics[width=0.33\textwidth]{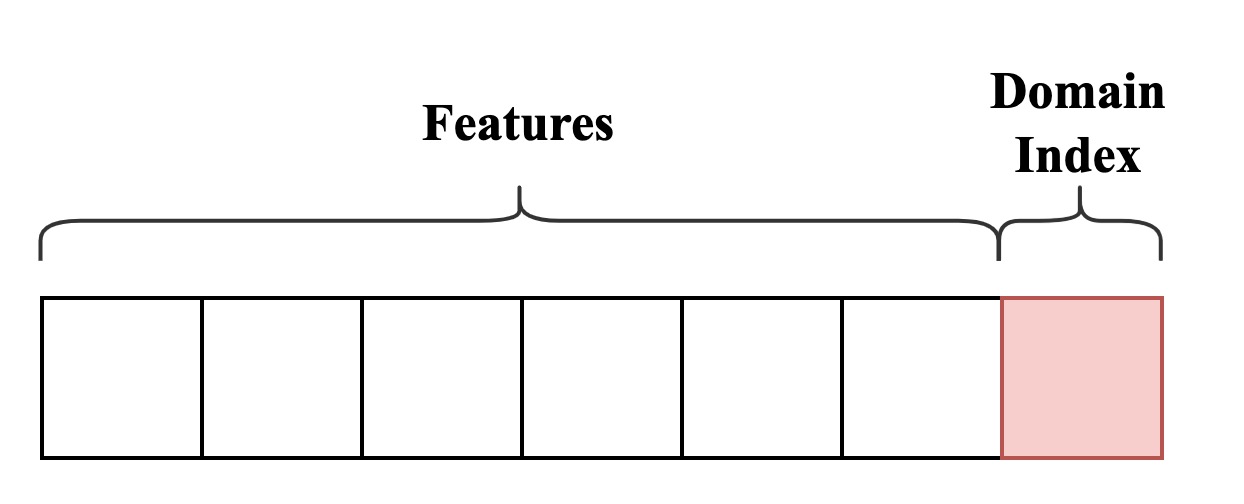}\label{index_info:one}}
		\subfloat[One-hot Concatenation]{\includegraphics[width=0.33\textwidth]{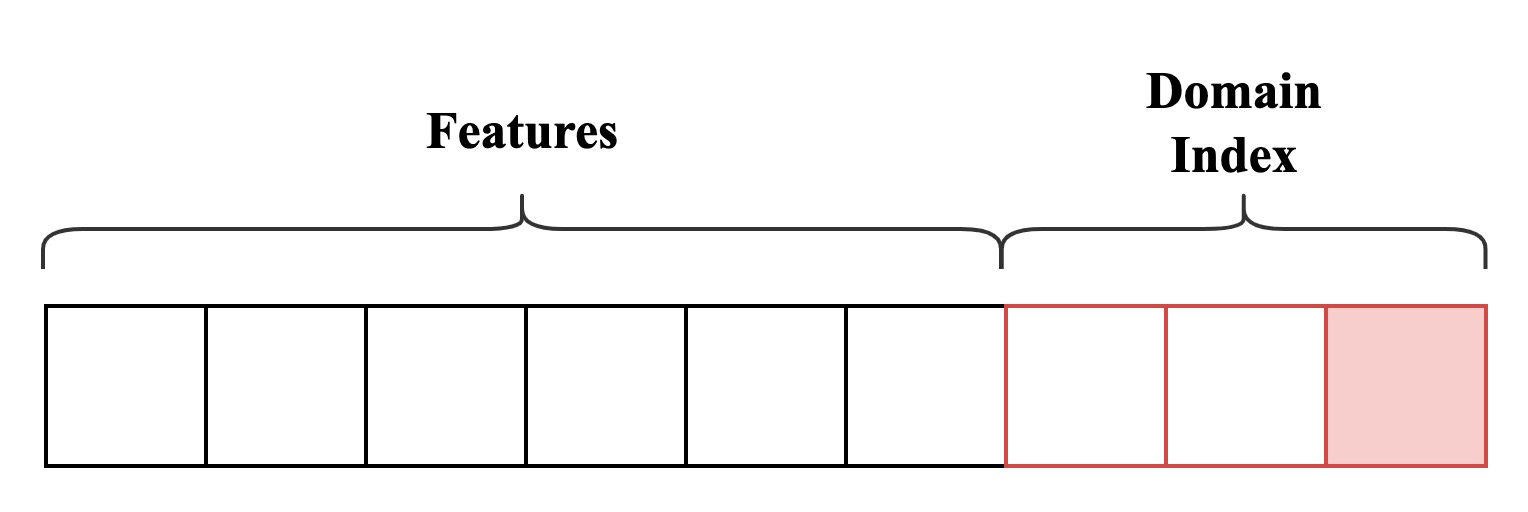}\label{index_info:two}}
		\subfloat[Outer Product]{\includegraphics[width=0.33\textwidth]{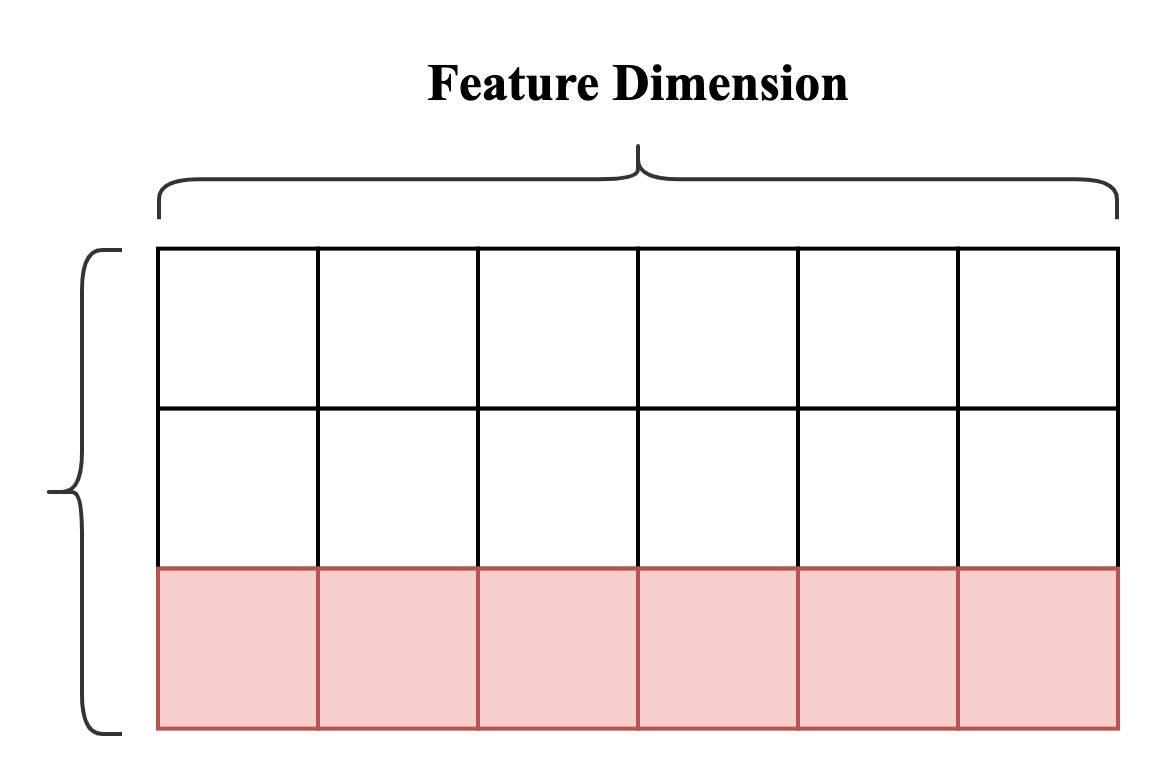}\label{index_info:three}}
\caption{Three domain index information incorporation strategies.}
\label{index_info}
\end{figure*}

We evaluate the algorithms on the EvolCircle and RPlate datasets and the results are reported in Table~\ref{tab:all_res}. The experimental results verify the advantage of our algorithm in exploiting evolving information. We can observe that the improvements induced by incorporating domain index is marginal, which indicates that it cannot properly leverage the evolving pattern of the environment.
 

    \begin{table}[t]
    \caption{Performance of the traditional DG algorithms with domain index information incorporated.}
    \label{tab:all_res}
    \begin{center}
    \adjustbox{max width=\textwidth}{%
    \begin{tabular}{lccccccc}
    \toprule
    \textbf{Strategy}         & \textbf{EvolCircle}  & \textbf{RPlate} & \textbf{Average}              \\
    \midrule
    ERM       &72.7 $\pm$ 1.1              &63.9 $\pm$ 0.9              & 68.3                                  \\
    ERM + One-Dimension        &73.6 $\pm$ 0.6              &64.9 $\pm$ 0.8              & 69.3                                 \\
    ERM + One-Hot              &74.6 $\pm$ 0.3              &64.0 $\pm$ 0.3               & 69.3                                  \\
    ERM + Outer Product        &74.6 $\pm$ 0.4              &65.3 $\pm$ 0.2              & 70.0                                  \\
    DPNets (Ours)              & \textbf{94.2 $\pm$ 0.9}& \textbf{95.0 $\pm$ 0.5}& \textbf{92.2}                                  \\
    \bottomrule
    \end{tabular}}
    \end{center}
    \end{table}

}

\section{Implementation Details}



We implement our algorithm based on \cite{domainbed}. To justify algorithm comparison between baselines and our algorithm, we adopted a random search of 20 trials for the hyper-parameter distribution. For each parameter combination, 5 repeated experiments are conducted. Then, we report the highest average performance for each algorithm-dataset pair. In this way, all parameters are automatically selected without human intervention, making the comparison of experimental results of different algorithms on different data fair and reliable. Almost all backbone and setting are following \cite{domainbed} except the followings. In one singe experiment, the model structure of $f_{\phi}$ and $f_{\psi}$ keeps the same. For EvolCircle and RPlate, we only use one single layer network to make the classifier linear for all algorithms. For other dataset, network are randomly chose based on the random search algorithm. 


\end{document}


\maketitle

\begin{abstract}
Domain generalization aims to learn a predictive model from multiple different but related source tasks that can generalize well to a target task without the need of accessing any target data. Existing domain generalization methods ignore the relation between tasks, implicitly assuming that all the tasks are sampled from a stationary environment. Therefore, they can fail when deployed in an evolving environment. To this end, we formulate and study the \emph{directional domain generalization} (DDG) scenario, which exploits not only the source data but also their evolving pattern to generate a model for the unseen task. Our theoretical result reveals the benefits of modeling the relation between two consecutive tasks by learning a globally consistent directional mapping function. In practice, our analysis also suggest solving the DDG problem in a meta-learning manner, which leads to \emph{directional prototypical network}, the first method for the DDG problem. Empirical evaluation on both synthetic and real-world data sets validates the effectiveness of our approach.
\end{abstract}

\section{Proofs}

\begin{lemma}
\label{lemmastart}
Let ${\mathcal D^g_t}(h) = g(\mathcal{D}_m)$ be the \emph{synthetic} target domain, and suppose the loss function $\ell$  is bounded within an interval $G:G=\max(\ell)-\min(\ell)$. Then, for any $h \in \mathcal{H}$, its target risk $R_{\mathcal{D}_t}(h)$ can be upper bounded by:
\begin{align*}
    R_{\mathcal D_t}(h)\leq R_{\mathcal D^g_t}(h) +\frac{G}{\sqrt{2}}\sqrt{d_{JS}(\mathcal D^g_t||\mathcal D_t)},
\end{align*}
where $d_{JS}(\mathcal D^g_t||\mathcal D_t)$ is the Jensen-Shannon (JS) divergence between $\mathcal D^g_t$ and $\mathcal D_t$.
\end{lemma}

We first prove an intermediate lemma:

\begin{lemma2}\label{lemma1}
Let $z\in\mathcal{Z}=\mathcal{X}\times\mathcal{Y}$ be the real valued integrable random variable, let $P$ and $Q$ are two distributions on a common space $\mathcal{Z}$ such that $Q$ is absolutely continuous w.r.t. $P$. If for any function $f$ and $\lambda\in\R$ such that $\E_P[e^{\lambda(f(z)-\E_P(f(z))}]<\infty$, then we have:
\begin{equation*}
   \lambda (\E_{Q} f(z) - \E_{P}f(z)) \leq D_{\text{KL}}(Q\|P) + \log \E_P[e^{\lambda(f(z)-\E_P(f(z))}],
\end{equation*}
where $D_{\text{KL}}(Q\|P)$ is the Kullback–Leibler divergence between distribution $Q$ and $P$, and the equality arrives when $f(z)= \E_{P} f(z) + \frac{1}{\lambda}\log(\frac{d Q}{d P})$.
\end{lemma2}

\begin{proof}
We let $g$ be {any} function such that $\E_P[e^{g(z)}]<\infty$, then we define a random variable $Z_g(z) = \frac{e^{g(z)}}{\E_P[e^{g(z)}]}$, then we can verify that $\E_{P}(Z_g) =1$. We assume another distribution $Q$ such that $Q$ (with distribution density $q(z)$) is absolutely continuous w.r.t. $P$ (with distribution density $p(z)$), then we have:
\begin{equation*}
\begin{split}
     \E_{Q}[\log Z_g] & = \E_{Q}[\log\frac{q(z)}{p(z)} + \log(Z_g\frac{p(z)}{q(z)})]  = D_{\text{KL}}(Q\|P) + \E_{Q}[\log(Z_g\frac{p(z)}{q(z)})]\\
     & \leq D_{\text{KL}}(Q\|P) + \log\E_{Q}[\frac{p(z)}{q(z)}Z_g]= D_{\text{KL}}(Q\|P) + \log \E_{P}[Z_g]
\end{split}
\end{equation*}
Since $\E_{P}[Z_g] = 1$ and according to the definition we have $\E_{Q}[\log Z_g] = \E_{Q}[g(z)] - \E_{Q}\log\E_{P}[e^{g(z)}] = \E_{Q}[g(z)] - \log\E_{P}[e^{g(z)}]$ (since $\E_{P}[e^{g(z)}]$ is a constant w.r.t. $Q$) and we therefore have:
\begin{equation}
    \E_{Q}[g(z)] \leq  \log\E_{P}[e^{g(z)}] + D_{\text{KL}}(Q\|P)
    \label{change_of_measure}
\end{equation}
Since this inequality holds for any function $g$ with finite moment generation function, then we let $g(z) = \lambda( f(z)-\E_P f(z))$ such that $\E_P[e^{f(z)-\E_P f(z)}]<\infty$. Therefore we have $\forall \lambda$ and $f$ we have:
\begin{equation*}
    \E_{Q}\lambda(f(z)-\E_P f(z)) \leq D_{\text{KL}}(Q\|P) + \log \E_P[e^{\lambda(f(z)-\E_P f(z)}]
\end{equation*}
Since we have $\E_{Q}\lambda(f(z)-\E_P  f(z)) = \lambda \E_{Q} (f(z)-\E_P f(z))) = \lambda (\E_{Q} f(z) - \E_{P} f(z))$, therefore we have:
\begin{equation*}
   \lambda (\E_{Q} f(z) - \E_{P} f(z)) \leq D_{\text{KL}}(Q\|P) + \log \E_P[e^{\lambda (\E_{Q} f(z) - \E_{P} f(z))}]
\end{equation*}
As for the attainment in the equality of Eq.(\ref{change_of_measure}), we can simply set  $g(z) = \log(\frac{q(z)}{p(z)})$, then we can compute $\E_{P}[e^{g(z)}]=1$ and the equality arrives. Therefore in Lemma 1, the equality reaches when $\lambda(f(z)- \E_{P} f(z)) = \log(\frac{d Q}{d P})$.
\end{proof}

Given Lemma~\ref{lemma1}, we are ready to prove Lemma~\ref{lemmastart}.
\begin{proof}
According to Lemma 1, $\forall \lambda>0$ we have: 
\begin{equation}
    \E_Q f(z) - \E_P f(z) \leq \frac{1}{\lambda} (\log\E_{P}~e^{[\lambda(f(z)-\E_{P}f(z))]} + D_{\text{KL}}(Q\|P))
    \label{sub_1}
\end{equation}

\noindent And $\forall \lambda<0$ we have:
\begin{equation}
    \E_Q f(z) - \E_P f(z) \geq \frac{1}{\lambda} (\log\E_{P}~e^{[\lambda(f(z)-\E_{P}f(z))]} + D_{\text{KL}}(Q\|P))
    \label{sub_2}
\end{equation}

Then we introduce an intermediate distribution $\mathcal{M}(z) = \frac{1}{2}(\mathcal{S}(z) + \mathcal{T}(z))$, then $\text{supp}(\mathcal{S})\subseteq\text{supp}(\mathcal{M})$ and $\text{supp}(\mathcal{T})\subseteq\text{supp}(\mathcal{M})$, and let $f=\ell$. Since the random variable $\ell$ is bounded through $G =  \max(L) -\min(L)$, then according to \cite{wainwright2019high}(Chapter 2.1.2), $\ell-\E_{P}\ell$ is sub-Gaussian with parameter at most $\sigma = \frac{G}{2}$, then we can apply Sub-Gaussian property to bound the $\log$ moment generation function:
\begin{equation*}
    \log\E_{P}~e^{[\lambda(\ell(z)-\E_{P}\ell(z))]} \leq \log e^{\frac{\lambda^2\sigma^2}{2}} \leq \frac{\lambda^2G^2}{8}.
\end{equation*}

\noindent In Eq.(\ref{sub_1}), we let $Q = \mathcal{T}$ and $P=\mathcal{M}$, then $\forall \lambda>0$ we have:
\begin{equation}
    \E_{\mathcal{T}}~\ell(z) - \E_{\mathcal{M}}~\ell(z) \leq \frac{G^2\lambda}{8} +  \frac{1}{\lambda}D_{\text{KL}}(\mathcal{T}\|\mathcal{M})
    \label{sub_3}
\end{equation}

\noindent In Eq.(\ref{sub_2}), we let $Q = \mathcal{S}$ and $P=\mathcal{M}$, then $\forall \lambda<0$ we have:
\begin{equation}
    \E_{\mathcal{S}}~\ell(z) - \E_{\mathcal{M}}~\ell(z) \geq \frac{G^2\lambda}{8} +  \frac{1}{\lambda}D_{\text{KL}}(\mathcal{S}\|\mathcal{M})
    \label{sub_4}
\end{equation}

\noindent In Eq.(\ref{sub_3}), we denote $\lambda=\lambda_0>0$ and $\lambda=-\lambda_0<0$ in Eq.(\ref{sub_4}).
Then Eq.(\ref{sub_3}),  Eq.(\ref{sub_4}) can be reformulated as:
\begin{equation}
\begin{split}
    & \E_{\mathcal{T}}~\ell(z) - \E_{\mathcal{M}}~\ell(z) \leq \frac{G^2\lambda_0}{8} +  \frac{1}{\lambda_0}D_{\text{KL}}(\mathcal{T}\|\mathcal{M})\\
    & \E_{\mathcal{M}}~\ell(z) - \E_{\mathcal{S}}~\ell(z) \leq \frac{G^2\lambda_0}{8} +  \frac{1}{\lambda_0}D_{\text{KL}}(\mathcal{S}\|\mathcal{M})
\end{split}
    \label{sub_5}
\end{equation}
Adding the two inequalities in Eq.(\ref{sub_5}), we therefore have:
\begin{equation}
    \E_{\mathcal{T}}~\ell(z)  \leq \E_{\mathcal{S}}~\ell(z) + \frac{1}{\lambda_0} \big(D_{\text{KL}}(\mathcal{S}\|\mathcal{M}) + D_{\text{KL}}(\mathcal{T}\|\mathcal{M}) \big) + \frac{\lambda_0}{4}G^2 
\end{equation}
Since the inequality holds for $\forall \lambda_0$, then by taking $\lambda_0 = \frac{2}{G}\sqrt{D_{\text{KL}}(\mathcal{S}\|\mathcal{M}) + D_{\text{KL}}(\mathcal{T}\|\mathcal{M})}$ we finally have:
\begin{equation}
      \E_{\mathcal{T}}~\ell(z)  \leq \E_{\mathcal{S}}~\ell(z) + \frac{G}{\sqrt{2}}\sqrt{D_{\text{JS}}(\mathcal{T}\|\mathcal{S})}
    \label{sub_6}
\end{equation}
\end{proof}

First, we introduce generating divergence bound in DDG problem:

\begin{theorem}[Generating Divergence Bound]
\label{jsd_decompose}
Let $\mathcal D(x,y)$ and $\mathcal D'(x,y)$ be two distributions over $\mathcal X \times \mathcal Y$, $\mathcal D(y)$ and $\mathcal D'(y)$ be the corresponding marginal distribution of $y$, $\mathcal D(x|y)$ and $\mathcal D'(x|y)$ be the corresponding conditional distribution given $y$, then we can get the following bound,

\begin{align*}
d_{JS}(\mathcal D_t^{g^*}||\mathcal D_t)\leq \frac{1}{m-1}\sum _{i=2}^{m}d_{JS}(\mathcal D_i^{g^*}||\mathcal D_{i})+\lambda
\end{align*}
\end{theorem}

\begin{proof}

According to Definition 1, we have:

\begin{align*}
d_{JS}(\mathcal D^{g^*}_t||\mathcal D_t)\leq d_{JS}(\mathcal D^{g^*}_2||\mathcal D_2)+|d_{JS}(\mathcal D^{g^*}_t||\mathcal D_t)-d_{JS}(\mathcal D_2^{g^*}||\mathcal D_2)|\leq d_{JS}(\mathcal D^{g^*}_2||\mathcal D_2)+\lambda
\end{align*}

Similarly, we have the followings:
\begin{equation*}
d_{JS}(\mathcal D^{g^*}_t||\mathcal D_t)\leq d_{JS}(\mathcal D^{g^*}_i||\mathcal D_i)+|d_{JS}(\mathcal D^{g^*}_t||\mathcal D_t)-d_{JS}(\mathcal D_i^{g^*}||\mathcal D_i)|\leq d_{JS}(\mathcal D^{g^*}_i||\mathcal D_i)+\lambda
\end{equation*}
\begin{equation*}
...
\end{equation*}
\begin{equation*}
d_{JS}(\mathcal D^{g^*}_t||\mathcal D_t)\leq d_{JS}(\mathcal D^{g^*}_m||\mathcal D_m)+|d_{JS}(\mathcal D^{g^*}_t||\mathcal D_t)-d_{JS}(\mathcal D_m^{g^*}||\mathcal D_m)|\leq d_{JS}(\mathcal D^{g^*}_m||\mathcal D_m)+\lambda
\end{equation*}
Sum over all, we have:
\begin{equation*}
(m-1)\cdot d_{JS}(\mathcal D^{g^*}_t||\mathcal D_t)\leq \sum _{i=2}^{m}d_{JS}(\mathcal D_i^{g^*}||\mathcal D_{i+1})+(m-1)\cdot\lambda
\end{equation*}

Then, we have:

\begin{align*}
d_{JS}(\mathcal D_t^{g^*}||\mathcal D_t)\leq \frac{1}{m-1}\sum _{i=2}^{m}d_{JS}(\mathcal D_i^{g^*}||\mathcal D_{i})+\lambda
\end{align*}
\end{proof}

\begin{theorem}
\label{theoremds}
Let $\{\mathcal D_1,\mathcal D_2,...,\mathcal D_m\}$ be $m$ observed source domains sampled sequentially from an evolving environment $\mathcal{E}$, and $\mathcal{D}_t$ be the next unseen target domain: $\mathcal{D}_t = \mathcal{D}_{m+1}$. Then, if $\mathcal{E}$ is $\lambda$-consistent, we have
\begin{align*}
   R_{\mathcal{D}_t}(h) \le R_{\mathcal{D}_t^{g^*}}(h)+\frac{G}{\sqrt{2(m-1)}}\Bigg(\sqrt{\sum _{i=2}^{m}d_{JS}(\mathcal D_{i}^{g^*}||\mathcal D_{i})}+\sqrt{\lambda}\Bigg).
\end{align*}
\end{theorem}

\begin{proof}

Plug Theorem \ref{theoremds} into Lemma \ref{lemmastart}, then we have:

\begin{align*}
R_{\mathcal D_{t}}(h)&\leq R_{\mathcal D_t ^{g^*}}(h)+\frac{G}{\sqrt{2}}\sqrt{d_{JS}(\mathcal D_t^{g^*}||\mathcal D_t)}\\
&\leq R_{\mathcal D_t ^{g^*}}(h)+\frac{G}{\sqrt{2}}\sqrt{\frac{1}{m-1}\sum _{i=2}^{m}d_{JS}(\mathcal D_i^{g^*}||\mathcal D_{i})+\lambda}\\
&\leq R_{\mathcal D_t ^{g^*}}(h)+\frac{G}{\sqrt{2(m-1)}}\sqrt{\sum_{i=2}^{m}{d_{JS}(\mathcal D_i^{g^*}||\mathcal D_{i})}}+G\sqrt{\frac{\lambda}{2}}
\end{align*}

\end{proof}

\subsection{Proof of Corollary 1}
We first introducte the Jensen Shannon (J-S) Divergence Decomposition Theorem:
\begin{theorem}[Jensen Shannon (J-S) Divergence Decomposition]
\label{jsd_decompose}
Let $\mathcal D(x,y)$ and $\mathcal D'(x,y)$ be two distributions over $\mathcal X \times \mathcal Y$, $\mathcal D(y)$ and $\mathcal D'(y)$ be the corresponding marginal distribution of $y$, $\mathcal D(x|y)$ and $\mathcal D'(x|y)$ be the corresponding conditional distribution given $y$, then we can get the following bound,

\begin{align*}
d_{JS}(\mathcal D(x,y)||\mathcal D'(x,y))\leq & d_{JS}(\mathcal D(y)||\mathcal D'(y))+\\\mathbb E_{y\sim \mathcal D(y)}d_{JS}(\mathcal D(x|y)||\mathcal D'(x|y))+&\mathbb E_{y\sim \mathcal D'(y)}d_{JS}(\mathcal D(x|y)||\mathcal D'(x|y))
\end{align*}
\end{theorem}

\begin{proof}

According to the definition we can introduce an intermediate distribution 

\begin{equation*}
    \mathcal M(x,y)=\frac{1}{2}(\mathcal D(x,y)+\mathcal D'(x,y))
\end{equation*}

Then we have:
\begin{align*}  
2\cdot d_{JS}(\mathcal D(x,y)||\mathcal D'(x,y))&=d_{KL}(\mathcal D(x,y)||\mathcal M(x,y))+d_{KL}(\mathcal D'(x,y)||\mathcal M(x,y))\\&=d_{KL}(\mathcal D(y)||\mathcal M (y))+\mathbb E_{y\sim \mathcal D(y)} d_{KL}(\mathcal D(x|y)||\mathcal M(x|y))\\&+d_{KL}(\mathcal D'(y)||\mathcal M(y))+\mathbb E_{y\sim \mathcal D'(y)} d_{KL}(\mathcal D(x|y)||\mathcal M(x|y))\\&=2\cdot d_{JS}(\mathcal D(y)||\mathcal D'(y))+\mathbb E_{y\sim \mathcal D(y)} d_{KL}(\mathcal D(x|y)||\mathcal M(x|y))\\&+\mathbb E_{y\sim \mathcal D'(y)} d_{KL}(\mathcal D(x|y)||\mathcal M(x|y))
\end{align*}

To bound the last two terms with JSD, we have:

\begin{align*}
d_{KL}(\mathcal D(x|y)||\mathcal M(x|y))&\leq d_{KL}(\mathcal D(x|y)||\mathcal M(x|y))+d_{KL}(\mathcal D'(x|y)||\mathcal M(x|y))\\&=2\cdot d_{JS}(\mathcal D(x|y)||\mathcal D'(x|y))
\end{align*}

Also,
\begin{align*}
d_{KL}(\mathcal D'(x|y)||\mathcal M(x|y))&\leq d_{KL}(\mathcal D(x|y)||\mathcal M(x|y))+d_{KL}(\mathcal D'(x|y)||\mathcal M(x|y))\\&=2\cdot d_{JS}(\mathcal D(x|y)||\mathcal D'(x|y))
\end{align*}

Now we have:

\begin{align*}
d_{JS}(\mathcal D(x,y)||\mathcal D'(x,y))\leq & d_{JS}(\mathcal D(y)||\mathcal D'(y))+\\\mathbb E_{y\sim \mathcal D(y)}d_{JS}(\mathcal D(x|y)||\mathcal D'(x|y))+&\mathbb E_{y\sim \mathcal D'(y)}d_{JS}(\mathcal D(x|y)||\mathcal D'(x|y))
\end{align*}

which conclude the proof.

\end{proof}

\begin{corollary}
\label{corollay1}
Following the assumptions of Theorem 1, the target risk can be bounded by
\begin{align*}
 &R_{\mathcal D_t}(h) \leq  R_{\mathcal D_t^{g^*}}(h)+\frac{G}{\sqrt{2(m-1)}}\Bigg(\underbrace{\sqrt{\sum _{i=2}^{m}d_{JS}(\mathcal D^{g^*}_{i}(y)||\mathcal D_{i}(y))}}_{\bold{I}}\\
& \hspace{6pt} +\underbrace{\sqrt{\sum _{i=2}^{m}\mathbb E_{y\sim \mathcal D_{i}^{g^*}(y)}d_{JS}(\mathcal D^{g^*}_{i}(x|y)||\mathcal D_{i}(x|y))}}_{\bold{II}}+\underbrace{\sqrt{\sum _{i=2}^{m}\mathbb E_{y\sim \mathcal D_{i}(y)}d_{JS}(\mathcal D^{g^*}_{i}(x|y)||\mathcal D_{i}(x|y))}}_{\bold{III}} + \sqrt{\lambda}\Bigg).
\end{align*}
\end{corollary}

\begin{proof}

\begin{align*}   R_{\mathcal{D}_t}(h) &\le R_{\mathcal D_t ^{g^*}}(h)+\frac{G}{\sqrt{2(m-1)}}\sqrt{\sum_{i=2}^{m}{d_{JS}(\mathcal D_i^{g^*}||\mathcal D_{i})}}+G\sqrt{\frac{\lambda}{2}}\\&\le R_{\mathcal D_t ^{g^*}}(h)+G\sqrt{\frac{\lambda}{2}}+\frac{G}{\sqrt{2(m-1)}}\cdot \\&\sqrt{\sum_{i=2}^{m}{d_{JS}(\mathcal D_i^{g^*}(y)||\mathcal D_i(y))+\mathbb E_{y\sim \mathcal D_i(y)}d_{JS}(\mathcal D_i^{g^*}(x|y)||\mathcal D_i(x|y))+\mathbb E_{y\sim \mathcal D_i^{g^*}(y)}d_{JS}(\mathcal D_i^{g^*}(x|y)||\mathcal D_i(x|y))}}\\
&\\&\le R_{\mathcal D_t^{g^*}}(h)+\frac{G}{\sqrt{2(m-1)}}\Bigg(\underbrace{\sqrt{\sum _{i=2}^{m}d_{JS}(\mathcal D^{g^*}_{i}(y)||\mathcal D_{i}(y))}}_{\bold{I}} \hspace{6pt} +\underbrace{\sqrt{\sum _{i=2}^{m}\mathbb E_{y\sim \mathcal D_{i}^{g^*}(y)}d_{JS}(\mathcal D^{g^*}_{i}(x|y)||\mathcal D_{i}(x|y))}}_{\bold{II}}+\\&\underbrace{\sqrt{\sum _{i=2}^{m}\mathbb E_{y\sim \mathcal D_{i}(y)}d_{JS}(\mathcal D^{g^*}_{i}(x|y)||\mathcal D_{i}(x|y))}}_{\bold{III}}\Bigg)+ G\sqrt{\frac{\lambda}{2}}
\end{align*}

\end{proof}

\section{Implementation}



TODO In one singe experiment, the model structure of $f_{\phi}$ and $f_{\psi}$ keeps the same. For EvolCircle and RPlate, we use one layer network to make the classifier is linear. For other dataset, network are rondomly chose from based on the random search algorithm.


\cite{domainbed}

\bibliography{iclr2021_conference}
\bibliographystyle{iclr2021_conference}


\appendix 
\section{Theories}
\label{appendix_theory}

We first prove an intermediate lemma:

\begin{lemma}\label{lemma1}
Let $z\in\mathcal{Z}=\mathcal{X}\times\mathcal{Y}$ be the real valued integrable random variable, let $P$ and $Q$ are two distributions on a common space $\mathcal{Z}$ such that $Q$ is absolutely continuous w.r.t. $P$. If for any function $f$ and $\lambda\in\R$ such that $\E_P[e^{\lambda(f(z)-\E_P(f(z))}]<\infty$, then we have:
\begin{equation*}
   \lambda (\E_{Q} f(z) - \E_{P}f(z)) \leq D_{\text{KL}}(Q\|P) + \log \E_P[e^{\lambda(f(z)-\E_P(f(z))}],
\end{equation*}
where $D_{\text{KL}}(Q\|P)$ is the Kullback–Leibler divergence between distribution $Q$ and $P$, and the equality arrives when $f(z)= \E_{P} f(z) + \frac{1}{\lambda}\log(\frac{d Q}{d P})$.
\end{lemma}
\begin{proof}
We let $g$ be {any} function such that $\E_P[e^{g(z)}]<\infty$, then we define a random variable $Z_g(z) = \frac{e^{g(z)}}{\E_P[e^{g(z)}]}$, then we can verify that $\E_{P}(Z_g) =1$. We assume another distribution $Q$ such that $Q$ (with distribution density $q(z)$) is absolutely continuous w.r.t. $P$ (with distribution density $p(z)$), then we have:
\begin{equation*}
\begin{split}
     \E_{Q}[\log Z_g] & = \E_{Q}[\log\frac{q(z)}{p(z)} + \log(Z_g\frac{p(z)}{q(z)})]  = D_{\text{KL}}(Q\|P) + \E_{Q}[\log(Z_g\frac{p(z)}{q(z)})]\\
     & \leq D_{\text{KL}}(Q\|P) + \log\E_{Q}[\frac{p(z)}{q(z)}Z_g]= D_{\text{KL}}(Q\|P) + \log \E_{P}[Z_g]
\end{split}
\end{equation*}
Since $\E_{P}[Z_g] = 1$ and according to the definition we have $\E_{Q}[\log Z_g] = \E_{Q}[g(z)] - \E_{Q}\log\E_{P}[e^{g(z)}] = \E_{Q}[g(z)] - \log\E_{P}[e^{g(z)}]$ (since $\E_{P}[e^{g(z)}]$ is a constant w.r.t. $Q$) and we therefore have:
\begin{equation}
    \E_{Q}[g(z)] \leq  \log\E_{P}[e^{g(z)}] + D_{\text{KL}}(Q\|P)
    \label{change_of_measure}
\end{equation}
Since this inequality holds for any function $g$ with finite moment generation function, then we let $g(z) = \lambda( f(z)-\E_P f(z))$ such that $\E_P[e^{f(z)-\E_P f(z)}]<\infty$. Therefore we have $\forall \lambda$ and $f$ we have:
\begin{equation*}
    \E_{Q}\lambda(f(z)-\E_P f(z)) \leq D_{\text{KL}}(Q\|P) + \log \E_P[e^{\lambda(f(z)-\E_P f(z)}]
\end{equation*}
Since we have $\E_{Q}\lambda(f(z)-\E_P  f(z)) = \lambda \E_{Q} (f(z)-\E_P f(z))) = \lambda (\E_{Q} f(z) - \E_{P} f(z))$, therefore we have:
\begin{equation*}
   \lambda (\E_{Q} f(z) - \E_{P} f(z)) \leq D_{\text{KL}}(Q\|P) + \log \E_P[e^{\lambda (\E_{Q} f(z) - \E_{P} f(z))}]
\end{equation*}
As for the attainment in the equality of Eq.(\ref{change_of_measure}), we can simply set  $g(z) = \log(\frac{q(z)}{p(z)})$, then we can compute $\E_{P}[e^{g(z)}]=1$ and the equality arrives. Therefore in Lemma 1, the equality reaches when $\lambda(f(z)- \E_{P} f(z)) = \log(\frac{d Q}{d P})$.
\end{proof}
In the classification problem, we define the observation pair $z=(x,y)$. We also define the loss function $\ell(z)=L\circ h(z)$ with deterministic hypothesis $h$ and prediction loss function $L$. Then for abuse of notation, we simply denote the loss function $\ell(z)$ in this part.

Then we introduce the following bound between synthetic domain $\mathcal D_t^g$ and real domain $\mathcal D_t$.
\begin{lemma}
\label{lemmastart}
Let ${\mathcal D^g_t}(h) = g(\mathcal{D}_m)$ be the \emph{synthetic} target domain, and suppose the loss function $\ell$  is bounded within an interval $G:G=\max(\ell)-\min(\ell)$. Then, for any $h \in \mathcal{H}$, its target risk $R_{\mathcal{D}_t}(h)$ can be upper bounded by:
\begin{align*}
    R_{\mathcal D_t}(h)\leq R_{\mathcal D^g_t}(h) +\frac{G}{\sqrt{2}}\sqrt{d_{}(\mathcal D^g_t||\mathcal D_t)},
\end{align*}
where $d_{}(\mathcal D^g_t||\mathcal D_t)$ is the Jensen-Shannon (JS) divergence between $\mathcal D^g_t$ and $\mathcal D_t$~\citep{jsd}.
\end{lemma}
\begin{remark}
    {To achieve a low risk on $\mathcal{D}_t$, Lemma~\ref{lemmastart} suggests (1) learning $h$ and $g$ to minimize the risk over the synthetic domain $\mathcal{D}_t^g$ and (2) learning $g$ to minimize the JS divergence between $\mathcal{D}_t^g$  and $\mathcal{D}_t$. While in practice $R_{\mathcal D^g_t}(h)$ can be approximated by the empirical risk, Lemma~\ref{lemmastart}  still cannot provide any practical guidelines for learning $g$ since $\mathcal{D}_t$ is unavailable. Moreover, note that $\mathcal{D}_t^g$  can be replaced by $g(\mathcal{D}_i)$ for any other source domain $i$ in $\mathcal{E}$ and the bound still holds. Thus, Lemma~\ref{lemmastart} does not provide any theoretical insight into how to discover and leverage the evolving pattern in $\mathcal{E}$. }
\end{remark}
\begin{proof}
According to Lemma~\ref{lemma1}, $\forall \lambda>0$ we have: 
\begin{equation}
    \E_Q f(z) - \E_P f(z) \leq \frac{1}{\lambda} (\log\E_{P}~e^{[\lambda(f(z)-\E_{P}f(z))]} + D_{\text{KL}}(Q\|P))
    \label{sub_1}
\end{equation}

\noindent And $\forall \lambda<0$ we have:
\begin{equation}
    \E_Q f(z) - \E_P f(z) \geq \frac{1}{\lambda} (\log\E_{P}~e^{[\lambda(f(z)-\E_{P}f(z))]} + D_{\text{KL}}(Q\|P))
    \label{sub_2}
\end{equation}

Then we introduce an intermediate distribution $\mathcal{M}(z) = \frac{1}{2}(\mathcal{D}(z) + \mathcal{D}'(z))$, then $\text{supp}(\mathcal{D})\subseteq\text{supp}(\mathcal{M})$ and $\text{supp}(\mathcal{D}')\subseteq\text{supp}(\mathcal{M})$, and let $f=\ell$. Since the random variable $\ell$ is bounded through $G =  \max(\ell) -\min(\ell)$, then according to \cite{wainwright2019high} (Chapter 2.1.2), $\ell-\E_{P}\ell$ is sub-Gaussian with parameter at most $\sigma = \frac{G}{2}$, then we can apply Sub-Gaussian property to bound the $\log$ moment generation function:
\begin{equation*}
    \log\E_{P}~e^{[\lambda(\ell(z)-\E_{P}\ell(z))]} \leq \log e^{\frac{\lambda^2\sigma^2}{2}} \leq \frac{\lambda^2G^2}{8}.
\end{equation*}

\noindent In Eq.(\ref{sub_1}), we let $Q = \mathcal{D}'$ and $P=\mathcal{M}$, then $\forall \lambda>0$ we have:
\begin{equation}
    \E_{\mathcal{D}'}~\ell(z) - \E_{\mathcal{M}}~\ell(z) \leq \frac{G^2\lambda}{8} +  \frac{1}{\lambda}D_{\text{KL}}(\mathcal{D}'\|\mathcal{M})
    \label{sub_3}
\end{equation}

\noindent In Eq.(\ref{sub_2}), we let $Q = \mathcal{D}$ and $P=\mathcal{M}$, then $\forall \lambda<0$ we have:
\begin{equation}
    \E_{\mathcal{D}}~\ell(z) - \E_{\mathcal{M}}~\ell(z) \geq \frac{G^2\lambda}{8} +  \frac{1}{\lambda}D_{\text{KL}}(\mathcal{D}\|\mathcal{M})
    \label{sub_4}
\end{equation}

\noindent In Eq.(\ref{sub_3}), we denote $\lambda=\lambda_0>0$ and $\lambda=-\lambda_0<0$ in Eq.(\ref{sub_4}).
Then Eq.(\ref{sub_3}),  Eq.(\ref{sub_4}) can be reformulated as:
\begin{equation}
\begin{split}
    & \E_{\mathcal{D}'}~\ell(z) - \E_{\mathcal{M}}~\ell(z) \leq \frac{G^2\lambda_0}{8} +  \frac{1}{\lambda_0}D_{\text{KL}}(\mathcal{D}'\|\mathcal{M})\\
    & \E_{\mathcal{M}}~\ell(z) - \E_{\mathcal{D}}~\ell(z) \leq \frac{G^2\lambda_0}{8} +  \frac{1}{\lambda_0}D_{\text{KL}}(\mathcal{D}\|\mathcal{M})
\end{split}
    \label{sub_5}
\end{equation}
Adding the two inequalities in Eq.(\ref{sub_5}), we therefore have:
\begin{equation}
    \E_{\mathcal{D}'}~\ell(z)  \leq \E_{\mathcal{D}}~\ell(z) + \frac{1}{\lambda_0} \big(D_{\text{KL}}(\mathcal{D}\|\mathcal{M}) + D_{\text{KL}}(\mathcal{D}'\|\mathcal{M}) \big) + \frac{\lambda_0}{4}G^2 
\end{equation}
Since the inequality holds for $\forall \lambda_0$, then by taking $\lambda_0 = \frac{2}{G}\sqrt{D_{\text{KL}}(\mathcal{D}\|\mathcal{M}) + D_{\text{KL}}(\mathcal{D}'\|\mathcal{M})}$ we finally have:
\begin{equation}
      \E_{\mathcal{D}'}~\ell(z)  \leq \E_{\mathcal{D}}~\ell(z) + \frac{G}{\sqrt{2}}\sqrt{D_{\text{JS}}(\mathcal{D}'\|\mathcal{D})}
    \label{sub_6}
\end{equation}
Let $\mathcal{D}' = \mathcal{D}_t$ and $\mathcal{D} = \mathcal{D}_t^g$, we complete our proof.
\end{proof}

Given the definition of consistency, we can bound the target risk in terms of {\small $d_{}(\mathcal D^g_i||\mathcal D_i)$} in the source domains.



\begin{theorem}
\label{theoremds}
Let $\{\mathcal D_1,\mathcal D_2,...,\mathcal D_m\}$ be $m$ observed source domains sampled sequentially from an evolving environment $\mathcal{E}$, and $\mathcal{D}_t$ be the next unseen target domain: $\mathcal{D}_t = \mathcal{D}_{m+1}$. Then, if $\mathcal{E}$ is $\lambda$-consistent, we have
{\small
\begin{align*}
   &R_{\mathcal{D}_t}(h) \le R_{\mathcal{D}_t^{g^*}}(h)+\frac{G}{\sqrt{2(m-1)}}\Bigg(\sqrt{\sum _{i=2}^{m}d_{}(\mathcal D_{i}^{g^*}||\mathcal D_{i})}+\sqrt{(m-1)\lambda}\Bigg).
\end{align*}
}
\end{theorem}
\begin{remark}\label{rm2}
{{\bf (1)}~Theorem~\ref{theoremds} highlights the role of the mapping function and $\lambda$-consistency in EDG. Given $g^*$, the target risk $R_{\mathcal{D}_t}(h)$ can be upper bounded by in terms of loss on the synthetic target domain $R_{\mathcal{D}_t^{g^*}}(h)$, $\lambda$, and the JS divergence between $\mathcal{D}_i$ and $\mathcal{D}_i^{g^*}$ in all \emph{observed} source domains. When $g^*$ can properly capture the evolving pattern of $\mathcal{E}$, we can train the classifier $h$ over the synthetic domain $\mathcal{D}_t^{g^*}$ generated from $\mathcal{D}_m$ and can still expect a low risk on $\mathcal{D}_t$. {\bf(2)}~$\lambda$ is \emph{unobservable} and is determined by $\mathcal{E}$ and $\mathcal{G}$. Intuitively, a small $\lambda$ suggests high \emph{predictability} of $\mathcal{E}$, which indicates that it is easier to predict the target domain $\mathcal{D}_t$. On the other hand, a large $\lambda$ indicates that there does not exist a global mapping function that captures the evolving pattern consistently well over domains. Consequently, generalization to the target domain could be challenging and we cannot expect to learn a good hypothesis $h$ on $\mathcal{D}_t$. {\bf (3)}~In practice, $g^*$ is not given, but can be learned by minimizing $d_{}(\mathcal D_{i}^{g}||\mathcal D_{i})$ in source domains. Besides, aligning $D_{i}^{g}$ and $\mathcal{D}_i$ is usually achieved by} representation learning: \emph{that is, learning $g: \mathcal{X} \rightarrow \mathcal{Z}$ to minimize $d_{}(\mathcal D_{i}^{g}||\mathcal D_{i}), \forall z \in \mathcal{Z}$.}
\end{remark}
\begin{proof}

According to Definition of $\lambda$-consistency, we have:

\begin{align*}
d_{}(\mathcal D^{g^*}_t||\mathcal D_t)\leq d_{}(\mathcal D^{g^*}_2||\mathcal D_2)+|d_{}(\mathcal D^{g^*}_t||\mathcal D_t)-d_{}(\mathcal D_2^{g^*}||\mathcal D_2)|\leq d_{}(\mathcal D^{g^*}_2||\mathcal D_2)+\lambda
\end{align*}

Similarly, we have the followings:
\begin{align*}
&d_{}(\mathcal D^{g^*}_t||\mathcal D_t)\leq d_{}(\mathcal D^{g^*}_i||\mathcal D_i)+|d_{}(\mathcal D^{g^*}_t||\mathcal D_t)-d_{}(\mathcal D_i^{g^*}||\mathcal D_i)|\leq d_{}(\mathcal D^{g^*}_i||\mathcal D_i)+\lambda \\
& \hspace{180pt} \cdots\\
&d_{}(\mathcal D^{g^*}_t||\mathcal D_t)\leq d_{}(\mathcal D^{g^*}_m||\mathcal D_m)+|d_{}(\mathcal D^{g^*}_t||\mathcal D_t)-d_{}(\mathcal D_m^{g^*}||\mathcal D_m)|\leq d_{}(\mathcal D^{g^*}_m||\mathcal D_m)+\lambda,
\end{align*}
which gives us
\begin{align*}
d_{}(\mathcal D_t^{g^*}||\mathcal D_t)\leq \frac{1}{m-1}\sum _{i=2}^{m}d_{}(\mathcal D_i^{g^*}||\mathcal D_{i})+\lambda
\end{align*}
Then, according to Lemma~\ref{lemmastart}, we have
\begin{align*}
R_{\mathcal D_{t}}(h)&\leq R_{\mathcal D_t ^{g^*}}(h)+\frac{G}{\sqrt{2}}\sqrt{d_{}(\mathcal D_t^{g^*}||\mathcal D_t)}\\
&\leq R_{\mathcal D_t ^{g^*}}(h)+\frac{G}{\sqrt{2}}\sqrt{\frac{1}{m-1}\sum _{i=2}^{m}d_{}(\mathcal D_i^{g^*}||\mathcal D_{i})+\lambda}\\
& \le R_{\mathcal{D}_t^{g^*}}(h)+\frac{G}{\sqrt{2(m-1)}}\Bigg(\sqrt{\sum _{i=2}^{m}d_{}(\mathcal D_{i}^{g^*}||\mathcal D_{i})}+\sqrt{(m-1)\lambda}\Bigg)
\end{align*}
\end{proof}




We first introduce the upper bound for Jensen Shannon (JS) Divergence decomposition:
\begin{lemma}
\label{jsd_decompose}
Let $\mathcal D(x,y)$ and $\mathcal D'(x,y)$ be two distributions over $\mathcal X \times \mathcal Y$, $\mathcal D(y)$ and $\mathcal D'(y)$ be the corresponding marginal distribution of $y$, $\mathcal D(x|y)$ and $\mathcal D'(x|y)$ be the corresponding conditional distribution given $y$, then we can get the following bound,
\begin{align*}
d_{}(\mathcal D(x,y)||\mathcal D'(x,y))& \leq  \\
& \hspace{-54pt} d_{}(\mathcal D(y)||\mathcal D'(y))+\mathbb E_{y\sim \mathcal D(y)}d_{}(\mathcal D(x|y)||\mathcal D'(x|y))+\mathbb E_{y\sim \mathcal D'(y)}d_{}(\mathcal D(x|y)||\mathcal D'(x|y))
\end{align*}
\end{lemma}

\begin{proof}

Let $\mathcal M(x,y)=\frac{1}{2}(\mathcal D(x,y)+\mathcal D'(x,y))$, then we have
\begin{align*}  
2\cdot d_{}(\mathcal D(x,y)||\mathcal D'(x,y))&=d_{KL}(\mathcal D(x,y)||\mathcal M(x,y))+d_{KL}(\mathcal D'(x,y)||\mathcal M(x,y))\\&=d_{KL}(\mathcal D(y)||\mathcal M (y))+\mathbb E_{y\sim \mathcal D(y)} d_{KL}(\mathcal D(x|y)||\mathcal M(x|y))\\&+d_{KL}(\mathcal D'(y)||\mathcal M(y))+\mathbb E_{y\sim \mathcal D'(y)} d_{KL}(\mathcal D(x|y)||\mathcal M(x|y))\\&=2\cdot d_{}(\mathcal D(y)||\mathcal D'(y))+\mathbb E_{y\sim \mathcal D(y)} d_{KL}(\mathcal D(x|y)||\mathcal M(x|y))\\&+\mathbb E_{y\sim \mathcal D'(y)} d_{KL}(\mathcal D(x|y)||\mathcal M(x|y))
\end{align*}

To bound the last two terms with JS divergence, we have:

\begin{align}\label{eqq1}
d_{KL}(\mathcal D(x|y)||\mathcal M(x|y))&\leq d_{KL}(\mathcal D(x|y)||\mathcal M(x|y))+d_{KL}(\mathcal D'(x|y)||\mathcal M(x|y))\\&=2\cdot d_{}(\mathcal D(x|y)||\mathcal D'(x|y)). \nonumber
\end{align} 
Also,
\begin{align}\label{eqq2}
d_{KL}(\mathcal D'(x|y)||\mathcal M(x|y))&\leq d_{KL}(\mathcal D(x|y)||\mathcal M(x|y))+d_{KL}(\mathcal D'(x|y)||\mathcal M(x|y))\\&=2\cdot d_{}(\mathcal D(x|y)||\mathcal D'(x|y)). \nonumber
\end{align}
Combining (\ref{eqq1}) and (\ref{eqq2}) gives us
\begin{align*}
d_{}(\mathcal D(x,y)||\mathcal D'(x,y))& \leq  \\
& \hspace{-54pt} d_{}(\mathcal D(y)||\mathcal D'(y))+\mathbb E_{y\sim \mathcal D(y)}d_{}(\mathcal D(x|y)||\mathcal D'(x|y))+\mathbb E_{y\sim \mathcal D'(y)}d_{}(\mathcal D(x|y)||\mathcal D'(x|y)),
\end{align*}
which concludes the proof.
\end{proof}

Based on Lemma \ref{jsd_decompose}, we can decompose $\mathcal{D}(x,y)$ into marginal and conditional distributions to motivate more practical EDG algorithms. For example, when it is decomposed into 
class prior $\mathcal{D}(y)$ and semantic  conditional distribution $\mathcal{D}(x|y)$, we have the following Corollary. 

\begin{corollary}
\label{corollay1}
Following the assumptions of Theorem 1, the target risk can be bounded by
{\small
\begin{align*}
 &R_{\mathcal D_t}(h) \leq  R_{\mathcal D_t^{g^*}}(h)+\frac{G}{\sqrt{2(m-1)}}\Bigg(\underbrace{\sqrt{\sum _{i=2}^{m}d_{}(\mathcal D^{g^*}_{i}(y)||\mathcal D_{i}(y))}}_{\bold{I}} + \sqrt{(m-1)\lambda}\\
&+\underbrace{\sqrt{\sum _{i=2}^{m}\mathbb E_{y\sim \mathcal D_{i}^{g^*}(y)}d_{}(\mathcal D^{g^*}_{i}(x|y)||\mathcal D_{i}(x|y))}}_{\bold{II}}+\underbrace{\sqrt{\sum _{i=2}^{m}\mathbb E_{y\sim \mathcal D_{i}(y)}d_{}(\mathcal D^{g^*}_{i}(x|y)||\mathcal D_{i}(x|y))}}_{\bold{III}} \Bigg).
\end{align*}
}
\end{corollary}
\begin{remark}\label{rmk3}
{To generalize well to $\mathcal{D}_t$, Corollary~\ref{corollay1} suggests that a good mapping function should capture both label shifts (term I) and semantic conditional distribution shifts (terms II \& III). In further, label shift can be eliminated by reweighting/resampling the instances according to the class ratios between domains~\citep{pmlr-v139-shui21a}. Then, we will have I = 0 and II = III, and the upper bound can be simplified as
}\end{remark}


\begin{proof}

\begin{align*}   R_{\mathcal{D}_t}(h) &\le R_{\mathcal D_t ^{g^*}}(h)+\frac{G}{\sqrt{2(m-1)}}\sqrt{\sum_{i=2}^{m}{d_{}(\mathcal D_i^{g^*}||\mathcal D_{i})}}+G\sqrt{\frac{\lambda}{2}}\\&\le R_{\mathcal D_t ^{g^*}}(h)+G\sqrt{\frac{\lambda}{2}}+\frac{G}{\sqrt{2(m-1)}}\cdot \\&\sqrt{\sum_{i=2}^{m}{d_{}(\mathcal D_i^{g^*}(y)||\mathcal D_i(y))+\mathbb E_{y\sim \mathcal D_i(y)}d_{}(\mathcal D_i^{g^*}(x|y)||\mathcal D_i(x|y))+\mathbb E_{y\sim \mathcal D_i^{g^*}(y)}d_{}(\mathcal D_i^{g^*}(x|y)||\mathcal D_i(x|y))}}\\
&\le R_{\mathcal D_t^{g^*}}(h)+\frac{G}{\sqrt{2(m-1)}}\Bigg({\sqrt{\sum _{i=2}^{m}d_{}(\mathcal D^{g^*}_{i}(y)||\mathcal D_{i}(y))}} + \sqrt{(m-1)\lambda}\\
& \hspace{24pt}+{\sqrt{\sum _{i=2}^{m}\mathbb E_{y\sim \mathcal D_{i}^{g^*}(y)}d_{}(\mathcal D^{g^*}_{i}(x|y)||\mathcal D_{i}(x|y))}}+{\sqrt{\sum _{i=2}^{m}\mathbb E_{y\sim \mathcal D_{i}(y)}d_{}(\mathcal D^{g^*}_{i}(x|y)||\mathcal D_{i}(x|y))}}\Bigg).
\end{align*}

\end{proof}

\rebuttal{

\rebuttal{






}

\section{Additional Experiments}

\subsection{Further Investigation of Interpolation and Extrapolation}
\label{appendix:inter_extra}

In Table \ref{tab:domain_num_RMNIST_table}, we can observe that the performances of ERM are not improved as the number of domains increases, which is counter-intuitive. We speculate that this is due to the ``extrapolation" nature of EDG.  To further investigate its impact on the generalization performance on the target domain, we compare the following three settings on the RMNIST dataset: 

\begin{enumerate}
  \item \textbf{DPNets-Evolving (Extrapolation).} Same as DPNets in Section~\ref{sec:exp}, where the target domain $\mathcal{D}_t = \mathcal{D}_{i+1}$.
  \item \textbf{ERM-Evolving (Extrapolation).} Same as ERM in Section~\ref{sec:exp}, where the target domain $\mathcal{D}_t = \mathcal{D}_{i+1}$.
  \item \textbf{ERM-Interpolation.} The ERM approach using the domain in the middle as the target domain, and the rest domains as the source domains. 
\end{enumerate}

Note that (1) and (2) are different algorithms with the same problem setup, and (2) and (3) use the same algorithm but with different problem setups.

We vary the the numbers of domain numbers and domain distances, and the results are reported in Fig.~\ref{erm_distance}, from which we have the following observations:

\begin{enumerate}
    \item The overall trend of the DPNets  is  going up as the number of domains increases. 
    \item The performances of ERM-Evolving do not increase as a function of the number of domain distance, which is consistent with the results in Table~\ref{tab:domain_num_RMNIST_table}.
    \item The performances of ERM-Interpolation increase as a function of the number of domain distance
    \item The improvements of  DPNets and ERM-Interpolation are not obvious once having sufficient amount of domains (e.g., \# of domains = 7 for ERM-Interpolation). We conjecture that it is because the evolving pattern of RMNIST is relatively simple. Thus, a small number of domains are sufficient to learn such a pattern, and increasing the number of domains may not necessarily improve the performances of DPNets and  ERM-Interpolation anymore. 
\end{enumerate}

The results indicate for extrapolation, having more domains does not necessarily help learn an invariant representation if we do not leverage the evolving pattern. Intuitively, as the target domain is on the ``edge" of the domains, having more domains also indicates that it is further away from the “center” of the source domains, which may even make the generalization even more challenging. On the other hand, if the target domain is ``among" the source domains (i.e., when we perform ``interpolation"), the source domains may act as augmented data which improve the generalization performance. In other words, if the more source domains will be beneficial for ``interpolation" but not necessarily for ``extrapolation" if the evolving pattern is not properly exploited.



\begin{figure*}[htbp]
		\centering 
		\subfloat[Distance = $3^\circ$]{\includegraphics[width=0.25\textwidth]{EXPS/Paper11/DomainDistance_3.png}\label{erm_distance:3}}
		\subfloat[Distance = $7^\circ$]{\includegraphics[width=0.25\textwidth]{EXPS/Paper11/DomainDistance_7.png}\label{erm_distance:7}}
		\subfloat[Distance = $11^\circ$]{\includegraphics[width=0.25\textwidth]{EXPS/Paper11/DomainDistance_11.png}\label{erm_distance:11}}
		\subfloat[Average]{\includegraphics[width=0.25\textwidth]{EXPS/Paper11/DomainDistance_Ave.png}\label{erm_distance:ave}}
\caption{Performance of algorithms when numbers of domains changes.}
\label{erm_distance}
\end{figure*}





}

\rebuttal{

\subsection{Incorporating Domain Information into ERM.}
\label{appendix:domain_info}

The ERM in Section~\ref{sec:exp} does not leverage the index information of the source domains. In order to make a more fair comparison, we incorporate the index information into ERM. Specifically, we investigate three strategies for incorporating the index information used in the literature \morecite{\citep{liu2020heterogeneous, long2017conditional,  zheng2018unsupervised, wen2019exploiting}}: (1) Index Concatenation (Fig. \ref{index_info:one}), where the domain index is directly concatenated as a one-dimension feature \citep{li2021learning}; (2) One-hot Concatenation (Fig. \ref{index_info:two}), where the domain index is first one-hot encoded \morecite{\citep{liu2020heterogeneous}} and then concatenated to the original features \citep{long2017conditional}; (3) Outer product (Fig. \ref{index_info:three}), where flattened the outer product of original features and the one-hot indexes is used as the final input \citep{shui2021benefits}.


\begin{figure*}[htbp]
		\centering 
		\subfloat[Index Concatenation ]{\includegraphics[width=0.33\textwidth]{EXPS/Paper12/one.png}\label{index_info:one}}
		\subfloat[One-hot Concatenation]{\includegraphics[width=0.33\textwidth]{EXPS/Paper12/two.png}\label{index_info:two}}
		\subfloat[Outer Product]{\includegraphics[width=0.33\textwidth]{EXPS/Paper12/three.png}\label{index_info:three}}
\caption{Three domain index information incorporation strategies.}
\label{index_info}
\end{figure*}

We evaluate the algorithms on the EvolCircle and RPlate datasets and the results are reported in Table~\ref{tab:domain_index}. The experimental results verify the advantage of our algorithm in exploiting evolving information. We can observe that the improvements induced by incorporating domain index is marginal, which indicates that it cannot properly leverage the evolving pattern of the environment.
 

\begin{table}[t]
    \caption{Performance of the traditional DG algorithms with domain index information incorporated.}
    \label{tab:domain_index}
    \begin{center}
    \adjustbox{max width=\textwidth}{%
    \begin{tabular}{lccccccc}
    \toprule
    \textbf{Strategy}         & \textbf{EvolCircle}  & \textbf{RPlate} & \textbf{Average}              \\
    \midrule
    ERM       &72.7 $\pm$ 1.1              &63.9 $\pm$ 0.9              & 68.3                                  \\
    ERM + One-Dimension        &73.6 $\pm$ 0.6              &64.9 $\pm$ 0.8              & 69.3                                 \\
    ERM + One-Hot              &74.6 $\pm$ 0.3              &64.0 $\pm$ 0.3               & 69.3                                  \\
    ERM + Outer Product        &74.6 $\pm$ 0.4              &65.3 $\pm$ 0.2              & 70.0                                  \\
    DPNets (Ours)              & \textbf{94.2 $\pm$ 0.9}& \textbf{95.0 $\pm$ 0.5}& \textbf{92.2}                                  \\
    \bottomrule
    \end{tabular}}
    \end{center}
    \end{table}

}



    
    

\subsection{ERM with only recent domains as source.}
\begin{table*}[!htbp]
\caption{Comparison of accuracy (\%) among different methods.}
\label{tab:recent}
\begin{center}
\adjustbox{max width=\textwidth}{%
\begin{tabular}{lccccccc}
\toprule
\textbf{Algorithm}         & \textbf{EvolCircle}  & \textbf{RPlate}  & \textbf{RMNIST} & \textbf{Portrait}  & \textbf{Cover Type} & \rebuttal{\textbf{FMoW}} & \textbf{Average}\\
\midrule
ERM-1 & 56.0 $\pm$ 1.4   & 94.1 $\pm$ 1.4   & 88 $\pm$ 88   & 80.9 $\pm$ 0.3   & 88 $\pm$ 88   & \rebuttal{88 $\pm$ 88}  & \rebuttal{88} \\
ERM-2 & 59.8 $\pm$ 1.3   & 97.6 $\pm$ 0.7   & 88 $\pm$ 88   & 88 $\pm$ 88   & 88 $\pm$ 88   & \rebuttal{88 $\pm$ 88}  & \rebuttal{88} \\
ERM-3 & 62.2 $\pm$ 2.6   & 96.2 $\pm$ 2.3   & 88 $\pm$ 88   & 88 $\pm$ 88   & 88 $\pm$ 88   & \rebuttal{88 $\pm$ 88}  & \rebuttal{88} \\
ERM & 72.7 $\pm$ 1.1   & 63.9 $\pm$ 0.9   & 79.4 $\pm$ 0.0   & 95.8 $\pm$ 0.1   & 71.8 $\pm$ 0.2   & \rebuttal{54.6 $\pm$ 0.1}  & \rebuttal{74.7} \\
DPNets (Ours)& \textbf{94.2 $\pm$ 0.9}   & \textbf{95.0 $\pm$ 0.5}   & \textbf{87.5 $\pm$ 0.1}   & \textbf{96.4 $\pm$ 0.0}   & \textbf{72.5 $\pm$ 1.0} & \rebuttal{\textbf{66.8 $\pm$ 0.1}}  & \rebuttal{\textbf{85.4}}\\
\bottomrule
\end{tabular}}
\end{center}
\end{table*}

\section{Implementation Details}


We implement our algorithm based on \cite{domainbed}. To justify algorithm comparison between baselines and our algorithm, we adopted a random search of 20 trials for the hyper-parameter distribution. For each parameter combination, 5 repeated experiments are conducted. Then, we report the highest average performance for each algorithm-dataset pair. In this way, all parameters are automatically selected without human intervention, making the comparison of experimental results of different algorithms on different data fair and reliable. Almost all backbone and setting are following \cite{domainbed} except the followings. In one singe experiment, the model structure of $f_{\phi}$ and $f_{\psi}$ keeps the same. For EvolCircle and RPlate, we only use one single layer network to make the classifier linear for all algorithms. For other dataset, network are randomly chose based on the random search algorithm. 



\appendix
\section{Directional Domain Generalization Bound}

\subsection{Proof of Theorem 1}

First, we introduce generating divergence bound in DDG TODO:

\begin{theorem}[Generating Divergence Bound TODO]
\label{jsd_decompose}
Let $\mathcal D(x,y)$ and $\mathcal D'(x,y)$ be two distributions over $\mathcal X \times \mathcal Y$, $\mathcal D(y)$ and $\mathcal D'(y)$ be the corresponding marginal distribution of $y$, $\mathcal D(x|y)$ and $\mathcal D'(x|y)$ be the corresponding conditional distribution given $y$, then we can get the following bound,

\begin{align*}
d_{JS}(\mathcal D_t^{g^*}||\mathcal D_t)\leq \frac{1}{m-1}\sum _{i=2}^{m}d_{JS}(\mathcal D_i^{g^*}||\mathcal D_{i})+\lambda
\end{align*}
\end{theorem}

\begin{proof}

According to Definition 1, we have:

\begin{align*}
d_{JS}(\mathcal D^{g^*}_t||\mathcal D_t)\leq d_{JS}(\mathcal D^{g^*}_2||\mathcal D_2)+|d_{JS}(\mathcal D^{g^*}_t||\mathcal D_t)-d_{JS}(\mathcal D_2^{g^*}||\mathcal D_2)|\leq d_{JS}(\mathcal D^{g^*}_2||\mathcal D_2)+\lambda
\end{align*}

Similarly, we have the followings:
\begin{equation*}
d_{JS}(\mathcal D^{g^*}_t||\mathcal D_t)\leq d_{JS}(\mathcal D^{g^*}_i||\mathcal D_i)+|d_{JS}(\mathcal D^{g^*}_t||\mathcal D_t)-d_{JS}(\mathcal D_i^{g^*}||\mathcal D_i)|\leq d_{JS}(\mathcal D^{g^*}_i||\mathcal D_i)+\lambda
\end{equation*}
\begin{equation*}
...
\end{equation*}
\begin{equation*}
d_{JS}(\mathcal D^{g^*}_t||\mathcal D_t)\leq d_{JS}(\mathcal D^{g^*}_m||\mathcal D_m)+|d_{JS}(\mathcal D^{g^*}_t||\mathcal D_t)-d_{JS}(\mathcal D_m^{g^*}||\mathcal D_m)|\leq d_{JS}(\mathcal D^{g^*}_m||\mathcal D_m)+\lambda
\end{equation*}
Sum over all, we have:
\begin{equation*}
(m-1)\cdot d_{JS}(\mathcal D^{g^*}_t||\mathcal D_t)\leq \sum _{i=2}^{m}d_{JS}(\mathcal D_i^{g^*}||\mathcal D_{i+1})+(m-1)\cdot\lambda
\end{equation*}

Then, we have:

\begin{euqtaion*}
$d_{JS}(\mathcal D_t^{g^*}||\mathcal D_t)\leq \frac{1}{m-1}\sum _{i=2}^{m}d_{JS}(\mathcal D_i^{g^*}||\mathcal D_{i})+\lambda$
\end{euqtaion*}
\end{proof}

\begin{theorem}
\label{theoremds}
Let $\{\mathcal D_1,\mathcal D_2,...,\mathcal D_m\}$ be $m$ observed source domains sampled sequentially from an evolving environment $\mathcal{E}$, and $\mathcal{D}_t$ be the next unseen target domain: $\mathcal{D}_t = \mathcal{D}_{m+1}$. Then, if $\mathcal{E}$ is $\lambda$-consistent, we have
\begin{align*}
   R_{\mathcal{D}_t}(h) \le R_{\mathcal{D}_t^{g^*}}(h)+\frac{G}{\sqrt{2(m-1)}}\Bigg(\sqrt{\sum _{i=2}^{m}d_{JS}(\mathcal D_{i}^{g^*}||\mathcal D_{i})}+\sqrt{\lambda}\Bigg).
\end{align*}
\end{theorem}

\begin{proof}

Plug TODO into Lemma TODO, then we have:

\begin{align*}
R_{\mathcal D_{t}}(h)&\leq R_{\mathcal D_t ^{g^*}}(h)+\frac{G}{\sqrt{2}}\sqrt{d_{JS}(\mathcal D_t^{g^*}||\mathcal D_t)}\\
&\leq R_{\mathcal D_t ^{g^*}}(h)+\frac{G}{\sqrt{2}}\sqrt{\frac{1}{m-1}\sum _{i=2}^{m}d_{JS}(\mathcal D_i^{g^*}||\mathcal D_{i})+\lambda}\\
&\leq R_{\mathcal D_t ^{g^*}}(h)+\frac{G}{\sqrt{2(m-1)}}\sqrt{\sum_{i=2}^{m}{d_{JS}(\mathcal D_i^{g^*}||\mathcal D_{i})}}+G\sqrt{\frac{\lambda}{2}}
\end{align*}

\end{proof}

\subsection{Proof of Corollary 1}
We first introducte the Jensen Shannon (J-S) Divergence Decomposition Theorem:
\begin{theorem}[Jensen Shannon (J-S) Divergence Decomposition]
\label{jsd_decompose}
Let $\mathcal D(x,y)$ and $\mathcal D'(x,y)$ be two distributions over $\mathcal X \times \mathcal Y$, $\mathcal D(y)$ and $\mathcal D'(y)$ be the corresponding marginal distribution of $y$, $\mathcal D(x|y)$ and $\mathcal D'(x|y)$ be the corresponding conditional distribution given $y$, then we can get the following bound,

\begin{align*}
d_{JS}(\mathcal D(x,y)||\mathcal D'(x,y))\leq & d_{JS}(\mathcal D(y)||\mathcal D'(y))+\\\mathbb E_{y\sim \mathcal D(y)}d_{JS}(\mathcal D(x|y)||\mathcal D'(x|y))+&\mathbb E_{y\sim \mathcal D'(y)}d_{JS}(\mathcal D(x|y)||\mathcal D'(x|y))
\end{align*}
\end{theorem}

\begin{proof}

According to the definition we can introduce an intermediate distribution 

\begin{equation*}
    \mathcal M(x,y)=\frac{1}{2}(\mathcal D(x,y)+\mathcal D'(x,y))
\end{equation*}

Then we have:
\begin{align*}  
2\cdot d_{JS}(\mathcal D(x,y)||\mathcal D'(x,y))&=d_{KL}(\mathcal D(x,y)||\mathcal M(x,y))+d_{KL}(\mathcal D'(x,y)||\mathcal M(x,y))\\&=d_{KL}(\mathcal D(y)||\mathcal M (y))+\mathbb E_{y\sim \mathcal D(y)} d_{KL}(\mathcal D(x|y)||\mathcal M(x|y))\\&+d_{KL}(\mathcal D'(y)||\mathcal M(y))+\mathbb E_{y\sim \mathcal D'(y)} d_{KL}(\mathcal D(x|y)||\mathcal M(x|y))\\&=2\cdot d_{JS}(\mathcal D(y)||\mathcal D'(y))+\mathbb E_{y\sim \mathcal D(y)} d_{KL}(\mathcal D(x|y)||\mathcal M(x|y))\\&+\mathbb E_{y\sim \mathcal D'(y)} d_{KL}(\mathcal D(x|y)||\mathcal M(x|y))
\end{align*}

To bound the last two terms with JSD, we have:

\begin{align*}
d_{KL}(\mathcal D(x|y)||\mathcal M(x|y))&\leq d_{KL}(\mathcal D(x|y)||\mathcal M(x|y))+d_{KL}(\mathcal D'(x|y)||\mathcal M(x|y))\\&=2\cdot d_{JS}(\mathcal D(x|y)||\mathcal D'(x|y))
\end{align*}

Also,
\begin{align*}
d_{KL}(\mathcal D'(x|y)||\mathcal M(x|y))&\leq d_{KL}(\mathcal D(x|y)||\mathcal M(x|y))+d_{KL}(\mathcal D'(x|y)||\mathcal M(x|y))\\&=2\cdot d_{JS}(\mathcal D(x|y)||\mathcal D'(x|y))
\end{align*}

Now we have:

\begin{align*}
d_{JS}(\mathcal D(x,y)||\mathcal D'(x,y))\leq & d_{JS}(\mathcal D(y)||\mathcal D'(y))+\\\mathbb E_{y\sim \mathcal D(y)}d_{JS}(\mathcal D(x|y)||\mathcal D'(x|y))+&\mathbb E_{y\sim \mathcal D'(y)}d_{JS}(\mathcal D(x|y)||\mathcal D'(x|y))
\end{align*}

which conclude the proof.

\end{proof}

\begin{corollary}
\label{corollay1}
Following the assumptions of Theorem 1, the target risk can be bounded by
\begin{align*}
 &R_{\mathcal D_t}(h) \leq  R_{\mathcal D_t^{g^*}}(h)+\frac{G}{\sqrt{2(m-1)}}\Bigg(\underbrace{\sqrt{\sum _{i=2}^{m}d_{JS}(\mathcal D^{g^*}_{i}(y)||\mathcal D_{i}(y))}}_{\bold{I}}\\
& \hspace{6pt} +\underbrace{\sqrt{\sum _{i=2}^{m}\mathbb E_{y\sim \mathcal D_{i}^{g^*}(y)}d_{JS}(\mathcal D^{g^*}_{i}(x|y)||\mathcal D_{i}(x|y))}}_{\bold{II}}+\underbrace{\sqrt{\sum _{i=2}^{m}\mathbb E_{y\sim \mathcal D_{i}(y)}d_{JS}(\mathcal D^{g^*}_{i}(x|y)||\mathcal D_{i}(x|y))}}_{\bold{III}} + \sqrt{\lambda}\Bigg).
\end{align*}
\end{corollary}

\begin{proof}

\begin{align*}   R_{\mathcal{D}_t}(h) &\le R_{\mathcal D_t ^{g^*}}(h)+\frac{G}{\sqrt{2(m-1)}}\sqrt{\sum_{i=2}^{m}{d_{JS}(\mathcal D_i^{g^*}||\mathcal D_{i})}}+G\sqrt{\frac{\lambda}{2}}\\&\le R_{\mathcal D_t ^{g^*}}(h)+G\sqrt{\frac{\lambda}{2}}+\frac{G}{\sqrt{2(m-1)}}\cdot \\&\sqrt{\sum_{i=2}^{m}{d_{JS}(\mathcal D_i^{g^*}(y)||\mathcal D_i(y))+\mathbb E_{y\sim \mathcal D_i(y)}d_{JS}(\mathcal D_i^{g^*}(x|y)||\mathcal D_i(x|y))+\mathbb E_{y\sim \mathcal D_i^{g^*}(y)}d_{JS}(\mathcal D_i^{g^*}(x|y)||\mathcal D_i(x|y))}}\\
&\\&\le R_{\mathcal D_t^{g^*}}(h)+\frac{G}{\sqrt{2(m-1)}}\Bigg(\underbrace{\sqrt{\sum _{i=2}^{m}d_{JS}(\mathcal D^{g^*}_{i}(y)||\mathcal D_{i}(y))}}_{\bold{I}} \hspace{6pt} +\underbrace{\sqrt{\sum _{i=2}^{m}\mathbb E_{y\sim \mathcal D_{i}^{g^*}(y)}d_{JS}(\mathcal D^{g^*}_{i}(x|y)||\mathcal D_{i}(x|y))}}_{\bold{II}}+\\&\underbrace{\sqrt{\sum _{i=2}^{m}\mathbb E_{y\sim \mathcal D_{i}(y)}d_{JS}(\mathcal D^{g^*}_{i}(x|y)||\mathcal D_{i}(x|y))}}_{\bold{III}}\Bigg)+ G\sqrt{\frac{\lambda}{2}}
\end{align*}

\end{proof}